\documentclass[twoside]{article}

%
\usepackage[accepted]{aistats2025}
%



\bibliographystyle{apalike}

\usepackage[utf8]{inputenc} 
\usepackage[T1]{fontenc}    
\usepackage{natbib}
\usepackage{hyperref}
\hypersetup{
    colorlinks=true,
    linkcolor=red,
    filecolor=magenta,      
    urlcolor=blue,
    citecolor=blue,
    pdfpagemode=FullScreen,
}
\usepackage{url}            
\usepackage{booktabs}       
\usepackage{amsfonts}       
\usepackage{nicefrac}       
\usepackage{microtype}
\usepackage{microtype}      
\usepackage{xcolor}         

\usepackage{amsmath,amssymb,amsthm}
\usepackage{mathtools,thmtools,stmaryrd,esint}
\usepackage{mathrsfs}
\declaretheorem[name=Theorem,within=section]{thm}
\declaretheorem[name=Lemma,numberlike=thm]{lemma}
\declaretheorem[name=Proposition,numberlike=thm]{prop}

\declaretheorem[name=Definition,numberlike=thm]{defn}

\declaretheorem[name=Example,numberlike=thm,style=remark]{example}

\begin{document}

%

%
\runningauthor{Ashwin Samudre, Mircea Petrache, Brian D. Nord, Shubhendu Trivedi}

\twocolumn[

\aistatstitle{Symmetry-Based Structured Matrices for Efficient Approximately Equivariant Networks}

\runningtitle{Efficient Approximately Equivariant Networks}



\aistatsauthor{
\textbf{Ashwin Samudre} \\
SFU
\And
\textbf{Mircea Petrache} \\
PUC Chile
\And
\textbf{Brian D. Nord} \\
FNAL \& UChicago
\And
\textbf{Shubhendu Trivedi} \\
Independent
}
\vspace{0.8cm}
]

\begin{abstract}
There has been much recent interest in designing neural networks (NNs) with relaxed equivariance, which interpolate between exact equivariance and full flexibility for consistent performance gains. In a separate line of work, structured parameter matrices with low displacement rank (LDR)---which permit fast function and gradient evaluation---have been used to create compact NNs, though primarily benefiting classical convolutional neural networks (CNNs). In this work, we propose a framework based on symmetry-based structured matrices to build approximately equivariant NNs with fewer parameters. Our approach unifies the aforementioned areas using Group Matrices (GMs), a forgotten precursor to the modern notion of regular representations of finite groups. GMs allow the design of structured matrices similar to LDR matrices, which can generalize all the elementary operations of a CNN from cyclic groups to arbitrary finite groups. We show GMs can also generalize classical LDR theory to general discrete groups, enabling a natural formalism for approximate equivariance. We test GM-based architectures on various tasks with relaxed symmetry and find that our framework performs competitively with approximately equivariant NNs and other structured matrix-based methods, often with one to two orders of magnitude fewer parameters.

\end{abstract}

\section{Introduction}\label{sec:intro}
\vspace{-1mm}
Over the past few years, the incorporation of symmetry in neural networks through \emph{group equivariance}~\citep{Cohen2016GroupEC} has started to mature as a fruitful line of research~\citep{Cohen2016GroupEC, batatia2024general, Bekkers20, cohenGeneralTheoryEquivariant2019, Cohen2019GaugeEC, Finzi2021APM, Kondor2018OnTG, levin2024any, Maron2019InvariantAE, mironenco2023lie, pearce2023brauer, RavanbakhshSP17,Weiler2021CoordinateIC,XuLDD22}. Such networks involve implementing a generalized form of convolution over groups---since linear equivariant maps over general fields are necessarily convolutional in nature~\citep{Bekkers20, cohenGeneralTheoryEquivariant2019, Kondor2018OnTG}, but can also be designed using ideas from invariant theory~\citep{BlumSmith2022MachineLA}, or by using non-linear equivariant maps via attention mechanisms~\citep{fuchs2020se, romero2020attentive, hutchinson2021lietransformer}. Successful applications have ranged over several areas, involving multiple data types~\citep{Townshend2021GeometricDL,Baek2021AccuratePO, Satorras2021EnEN, andersonCormorantCovariantMolecular2019, Klicpera2020DirectionalMP, Winkels2019PulmonaryND, kondor2018clebsch, Gordon2020PermutationEM, Sosnovik2021ScaleEI, Eismann2020HierarchicalRN, white-cotterell-2022-equivariant, ZhuWangGrasp2022, zaheer2017deep, kondor2018covariant, chen2021cyclically, gong2023general, maxson2024transferable, minartz2024equivariant, baker2023high}.

Despite the success of equivariant networks, several challenges remain. For instance, it has been demonstrated empirically~\citep{FinziSoftEquivariance, romero2022learning,ouderaa2022relaxing, wang2022approximately} that a strict equivariance constraint could harm performance---symmetry in real world tasks is rarely perfect and data measurements could be corrupted or systematically biased. Indeed, it is reasonable to expect that there could often be a mismatch between the symmetry encoded by the model and the symmetry present in the data. A natural modeling paradigm suggested by~\cite{FinziSoftEquivariance, romero2022learning,ouderaa2022relaxing, wang2022approximately} prescribes the design of more \emph{flexible} models that can calibrate the level of equivariance based on the data or task. This observation has been supported by a growing body of recent empirical~\citep{mcneela2023almost, wang2023relaxed, van2024learning, maile2022equivariance, urbano2023self} and theoretical work~\citep{Huang2023ApproximatelyEG, PetracheApproximate}.

In this paper, we study the principled design of approximately equivariant NNs with significantly reduced parameter counts and the attendant computational benefits. 
For this purpose, we examine a line of work independent of that on equivariant NNs, which is primarly concerned with designing compact deep learning models using structured matrix representations of fast transforms (see~\cite{rudra2023arithmetic} and the references therein).
One class of popular methods explicitly involves the design of computationally- and memory-effective structured matrices, intended to replace dense weight matrices in NNs~\citep{dao2022monarch,dao2019learning, dao2020kaleidoscope}. Yet another class of methods, proposed in the seminal works of~\cite{sindhwani2015structured, thomas2018learning}, instead operate with traditional structured matrices: Vandermonde (including polynomial and Chebyshev Vandermonde), Hankel, Toeplitz, Cauchy, Pick etc. Such matrices are a key object in engineering, especially in control theory, filtering theory, and signal processing~\citep{pan2012structured}. The special property of these matrices is that their compositions do not inherit their structure \emph{exactly}; however, they do so in an \emph{approximate sense}. This approximation is measured by the classical notion of the \emph{displacement rank} proposed by Kailath and co-workers~\citep{kailath1979displacement, kailath1995displacement, kailath1994generalized}---such matrices are said to possess \emph{displacement structure}, and a low degree of error on composite operations is indicated by a \emph{low displacement rank} (LDR). While~\cite{sindhwani2015structured, thomas2018learning} demonstrated that LDR matrices could be used to design efficient and compact NNs, they were only concerned with classical MLPs and CNNs and do not obviously extend to equivariant NNs. 


We start with the observation that general cyclic matrices---also known as circulant matrices, a sub-class of LDR matrices---are used to represent classical circular convolution\footnote{Or convolution over cyclic groups.}. It has already been observed by~\cite{thomas2018learning} that these readily yield a notion of approximate convolution (and approximate equivariance) in classical CNNs. It is this property, which is dependent on cyclic matrices being LDR, that~\cite{thomas2018learning} exploit to build low-resource CNNs. With motivation from the theory of regular representations, we use the forgotten notion of group matrices (GMs)\footnote{Group matrices are a precursor to the modern notion of regular representations, and pre-date modern group theory. For a history, we direct the reader to \cite{CHALKLEY1981121}.}, and obtain a new family of symmetry-based structured matrices that can be used to model convolution for general discrete groups. We first develop a generalization of classical CNNs using GMs by building analogues for each of their elementary operations. Then, simplifying older works on group matrices~\citep{gader1990displacement, waterhouse1992displacement}, we show that our formalism permits a principled notion of approximate equivariance for general discrete groups. In fact, somewhat analogous to~\cite{thomas2018learning}, we generalize LDR theory from cyclic matrices to their analogues for general discrete groups---thus giving a handle on quantifying error to exact equivariance. 
To align our exposition with that of modern equivariant NNs, we show how our framework naturally generalizes to the homogeneous spaces of discrete groups, as well as to compactly supported data on infinite discrete groups. On a variety of tasks, we show that our proposed method is able to consistently match or outperform baselines, both from approximately equivariant NNs and structured matrix-based frameworks, often with an order of magnitude or more, reduced parameter count.

This paper brings together the above-mentioned lines of work: equivariant NNs, approximately equivariant NNs, and structured matrix-based compression approaches. The result is a formalism that allows the construction of parameter-efficient approximately equivariant NNs with competitive performance. We summarize the main contributions of our paper below:
\vspace{-1mm}
\begin{itemize}
    \item We develop a formalism for constructing equivariant networks for general discrete groups using the notion of group matrices (GMs). We show how all the elementary operations in classical CNNs, such as taking strides and pooling, can be generalized to discrete groups using GMs. The resulting GM-CNNs involve the use of certain symmetry-based structured matrices which facilitate the construction of light-weight equivariant NNs. Further, the generalized pooling operation involves a form of group coarsening and is relevant to the development of group equivariant autoencoders.
    \item We present a simple procedure to construct GMs for larger discrete groups composed of direct products or semi-direct products of cyclic or permutation groups, for which GMs are easy to compute.
    \item We show that the GM-CNN formalism naturally permits a principled implementation of approximately equivariant group CNNs. Further, we connect GMs to the classical low displacement rank (LDR) theory, generalizing it from the specific case of LDR matrices with cyclic structure (thus cyclic groups) to discrete groups. We use the developed theory to quantify error from exact equivariance.
    \item We show our proposed framework extends to homogeneous spaces of discrete groups, and to compactly supported data on infinite discrete groups.
    \item On a variety of tasks, we show that our framework returns competitive performance, often with significantly reduced parameter count, compared to various approximately equivariant NNs and structured matrix-based frameworks. Our code is available at: \href{https://github.com/kiryteo/GM-CNN}{https://github.com/kiryteo/GM-CNN} 
\end{itemize}
\vspace{-5mm}
\section{Preliminaries and Formal Setup}
\label{sec:formal}
\vspace{-1mm}
\subsection{Matrices with Displacement Structure and Compressed Transforms in Deep Learning}
\label{sec:DSreview}

The work of \cite{thomas2018learning, sindhwani2015structured} considers traditional families of structured matrices---Hankel, Toeplitz, Vandermonde, Cauchy---to build compressed representations of MLPs and classical CNNs. In this section, we briefly describe the main idea. A matrix $M \in \mathbb{R}^{m \times n}$ could be called \emph{structured} if it can be represented in much fewer than $mn$ parameters. Examples include matrices where the elements have a simple formulaic relationship with other elements. The displacement operator approach, pioneered by Kailath \emph{et al.}~\citep{kailath1979displacement, kailath1995displacement}, consists of representing $M$ via matrices $A \in \mathbb{R}^{m \times m}$ and $B \in \mathbb{R}^{n \times n}$, which define a linear map known as the \textbf{displacement operator}, as follows:

\begin{defn}[Sylvester-type Displacement Operator:] 
Let $A$ and $B$ be fixed, not necessarily square matrices with compatible dimensions. The Sylvester-type displacement operator, denoted as $\nabla_{A,B}(M)$, is defined as the linear map: 
\looseness=-2
\vspace{-1mm}
\begin{equation*}
    \nabla_{A,B}(M): M \mapsto AM - MB,
\end{equation*}
\end{defn}
\vspace{-2mm}
\looseness=-2
where the difference $AM - MB$ is the \emph{residual} $R$. The rank of the matrix $R$ is called the \emph{displacement rank} (DR). The recovery of $M$ is immediate from $A, B, R$.

\textbf{Remark.} An alternative popular formulation of the displacement operator is the Stein-type displacement operator, which defines $\Delta_{A,B}(M): M \mapsto M - AMB$. For most purposes, the two formulations can be treated interchangeably~\citep{pan2012structured}.

The original formulation of DR was in the context of solutions of certain least square estimation problems of systems written in state-space form~\citep{kailath1979displacement, kailath1995displacement}. It was originally stated for matrices that were \emph{Toeplitz-like}, which were not \emph{exactly} shift-invariant, only \emph{approximately} so. The formulation in \cite{kailath1979displacement} showed that a Toeplitz-like matrix would be LDR. Later, such results were shown for all the classes of structured matrices listed above~\citep{pan2012structured, kailath1995displacement}. More importantly, many composite operations on LDR matrices---including transpose/inverse, addition, composition/multiplication, taking direct sums to construct a larger block matrix---still resulted in LDR matrices (see proposition 1 on closure in \cite{thomas2018learning}). It is this property that was used by \cite{sindhwani2015structured, thomas2018learning} to construct compressed representations of NNs. A particular form of the Toeplitz matrix, the circulant matrix, could be seen as implementing the usual circular convolution (see \ref{ex:circulant} below). Thus, a circulant-like matrix\footnote{A circulant-like matrix is circulant matrix to which an extremely sparse noise matrix is added. A circulant matrix implements circular convolution, but a matrix with some noise will implement approx. convolution. } would implement an approximate convolution (and thus approximate equivariance), and stacking them together in a NN would preserve the property of approximate equivariance. This point was noted and proved in Section 4 of~\cite{thomas2018learning} and was the basis for building approximately shift-equivariant CNNs in~\cite{thomas2018learning}. In the example below, we illustrate how working with circulant matrices corresponds to a convolution on the group $C_n$.

\begin{example}[Circulant Matrices]\label{ex:circulant}
    Note that for $A\in\mathbb R^{n\times n}$ equal to the canonical cyclic permutation $P$ on $n$ elements and $B=P^{-1}$, we have $\nabla_{P, P^{-1}}(M)=0$ iff $\Delta_{P,P^{-1}}(M)=0$, or iff $M$ is circulant. Then, we can interpret the entries of $v\in \mathbb R^n$ as encoding a function $f:C_n\to\mathbb R$ over the cyclic group $C_n=\mathbb Z/n\mathbb Z$. Thus, for circulant $M$ with first row $(m_{11},\dots, m_{1n})$ encoding map $\phi:C_n\to\mathbb R$, the multiplication $v\mapsto Mv$ corresponds to group-convolution operation on $C_n$.
\end{example}

The above exposition provides a tantalizing hint that building compressed approximately equivariant NNs in the spirit of ~\cite{thomas2018learning, sindhwani2015structured} could be a reasonable goal for general discrete groups---but we would need to identify special structured matrices for each group and ensure they obey an analogous LDR property as above. In the next section, we state group convolution and show how to express it in terms of special matrices called group matrices, which will prove key to achieving our goal.

\subsection{Group Matrices, Group Convolutions, and CNNs} 
\label{sec:gms}
In this section, we present our formulation of GM-CNNs. We first define group matrices, classical group convolution and reformulate group convolution in terms of GMs. Following which we show GMs naturally allow to generalize all the elementary operations of classical CNNs, including strides and pooling, to any discrete group. Additionally, we provide a procedure to construct GMs of certain larger discrete groups efficiently. Finally, we show that GMs enjoy a natural set of closure properties.

For ease of exposition, we describe the GM-CNN framework in this section using images as inputs, but it readily generalizes to general data types as will become clear later. 
In particular, we consider images of size $n\times n$, modulo the periodicity structure of each side, which is identified with the opposite side of the image. 
Such images can be considered as functions over copies of the group $G=C_n\times C_n$, which permits representing convolutions (as in classical CNNs\footnote{To be fully precise, in fact, classical CNNs use convolution with \emph{infinite}) group $\mathbb Z\times \mathbb Z$, which gives some extra technical difficulties as we need to keep track of image finite supports. This is solved in classical CNNs by padding. Later, in \S\ref{app:padding} we show how to extend our framework to general discrete infinite groups.}) as a group convolution on $G$. 

Since we seek a framework that uses symmetry-based structured matrices towards specific ends, we depart from the customary treatment of group convolutions that uses \emph{group representations}. 
We instead use the formalism of \textit{group matrices} (GMs) of discrete groups; while the GM formalism has a long history, it has mostly been forgotten and has not yet been used to represent general group convolutions. 
We first define the requisite notions of group matrices and group diagonals:

\textbf{Group matrices.} If $G$ is a group with cardinality $N$, then a matrix $M\in \mathbb R^{N\times N}$ 
is a \emph{group matrix} of $G$ if there exists a labeling of the rows and columns of $M$ by elements of $G$ such that whenever $gh=g'h'$ (with $g,h,g',h' \in G$) we have an identification of entries labeled by $(g,h), (g',h')$ i.e. that $M_{gh}=M_{g'h'}$.

\textbf{Group diagonals.} For $g\in G$ the \emph{group diagonal} matrix $B_g$ associated with $g$ is a particular group matrix with entries in $\{0,1\}$, whose entries are defined as
\begin{equation}\label{eq:gdiag}
    (B_g)_{h,h'}=\delta(h=gh'),
\end{equation}
where, $\delta(h=gh')$ equals $1$ if $h=gh'$, and $0$ otherwise (similar to the Kronecker delta notation). 
We observe that \textbf{group diagonals form a basis of the space of group matrices with entries in $\mathbb{X}=\mathbb R$}. 
$B_g$ are extremely sparse matrices with exactly one non-zero entry per row.
Thus, we can store them as arrays of size $|G|$, listing only the column indices of these entries, rather than as $|G|\times |G|$ matrices.


\textbf{Group convolution.} For standard group convolution, if $G$ is a finite group and $\phi,\psi:G\to \mathbb R$, then convolution of these functions is 
\begin{equation}\label{eq:gconv}
    \phi\star \psi(x):=\sum_{g\in G}\phi(g)\psi(g^{-1}x)=\sum_{\substack{g,h\in G:\\ gh=x}}\phi(g)\psi(h).
\end{equation}
In the second expression, we write the set of pairs $(g,g^{-1}x), g\in G$ implicitly as pairs $(g,h)\in G\times G$ satisfying the condition $hg=x$.
Observe that GMs permit a simple formulation for group convolutions for finite groups because they encode the group operation via a linear operator $h$. 

We next give GM formulations of general versions of common group convolutional network choices and operations: convolutions, small kernels, strides and pooling.

\textbf{Group convolution using group matrices.} We can re-express group convolution in the language of group matrices and group diagonals as follows.
Consider the functions $\phi,\psi:G\to\mathbb R$ as vectors in $\overrightarrow\phi, \overrightarrow\psi\in \mathbb R^G$, where the vector entries are indexed by $G$ -- e.g. $\overrightarrow\phi_g:=\phi(g)$. 
In this case, the linear transformation $\overrightarrow\psi\mapsto\overrightarrow{\phi\star\psi}$ is expressed by a simple matrix. 
Moreover, it is sufficient to combine group diagonals $B_g$ with coefficients encoded in the entries of $\psi_g$ to define convolution as:
\begin{equation}\label{eq:gmconv}
    \mathsf{Conv}_\psi\overrightarrow\phi:=\overrightarrow{\phi\star\psi}=\sum_{g\in G} \phi(g) B_g \overrightarrow \psi.
\end{equation}
The verification of \eqref{eq:gmconv} is straightforward: using definitions \eqref{eq:gconv} and \eqref{eq:gdiag}, we have, for all $x\in G$, 
\vspace{-1mm}
\begin{eqnarray}
\left[\overrightarrow{\phi\star\psi}\right]_x\!\!\!\!\! &:= &\!\!\!\sum_{g,h: gh=x}\!\!\!\!\phi(g)\psi(h) = \sum_{g,h}\phi(g)\psi(h)\delta(gh=x)\nonumber \\
&=&\!\!\!\!\! \sum_{g}\phi(g)\left[B_g\overrightarrow\psi\right]_x\!\!\!\!\! =\! \left[\sum_g \phi(g)B_g\overrightarrow\psi\right]_x\!\!\!\!\!.
\end{eqnarray}
\vspace{-2mm}


\textbf{Choosing small kernels.} In classical CNNs, kernels have support in a $k\times k$-neighborhood of the origin for small values of $k$: 
kernels are nonzero at pixels near the origin.
The analogue of this choice for general finite groups $G$ is to fix a word distance\footnote{For a finite group $G$ w.r.t the generating set $S$, assign each edge of the Cayley graph a metric of length 1. Then the distance between $g,h \in G$ equals the shortest path length in the Cayley graph from vertex $g$ to vertex $h$} and to restrict kernel coefficients $\overrightarrow \phi$ to be zero on elements of $G$ at a distance larger than $k$ from the origin. 
Consequently, if $N_k$ is the number of elements in the radius-$k$ neighborhood of the identity in $G$, then the sum from \eqref{eq:gmconv} only includes $N_k$ non-zero summands, which is typically $N_k$ orders of magnitude smaller than the cardinality $|G|$. 
Note that we are never required to compute the $B_g$ matrices except for these $N_k$ choices of $g$. 
This permits an efficient implementation analogous to classical CNNs. 

\textbf{Pooling operations: Group diagonals can be restricted to subgroups.}
Pooling and stride operations map input channels, written as functions $\psi:G\to \mathbb R$, to outputs of the form $\psi': H\to\mathbb R$, indexed by a subgroup $H\subseteq G$. 
Convolution restricts naturally to $H$, a property that can be formulated in terms of group diagonals. 
The group diagonals $B_h^H$ ($|H|\times|H|$ matrices with group $H$) acting over $\overrightarrow\psi'$ can be obtained from the $B_h^G$ ($|G|\times|G|$ matrices with group $G$ corresponding to elements $h\in H\subseteq G$) by removing rows and columns indexed by elements of $G\setminus H$. 
In particular, group diagonals are mapped to group diagonals under this operation, as a consequence of the following lemma:
\begin{lemma}\label{lem:subgroupdiag}
If $h\in H$, the rows of $G$-group diagonal matrix $B_h$ labeled by elements of $H$, have entries $1$ only in the columns whose index labels belong to $H$.
\end{lemma}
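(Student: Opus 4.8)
The plan is to unwind the definition of the $G$-group diagonal $B_h$ from equation \eqref{eq:gdiag} and read off which entries in a row labeled by an element of $H$ can be nonzero. Recall $(B_h)_{a,b} = \delta(a = hb)$ for $a,b \in G$. Fix a row index $a \in H \subseteq G$. The entry $(B_h)_{a,b}$ is nonzero precisely when $b = h^{-1}a$. Since $h \in H$ and $a \in H$, and $H$ is a subgroup (hence closed under inversion and multiplication), we get $h^{-1}a \in H$. So the unique column index $b$ with a nonzero entry in this row lies in $H$. This is essentially the whole argument.

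First I would state that $B_h$ has exactly one nonzero entry per row (already observed in the text right after \eqref{eq:gdiag}), so it suffices to locate that single entry for rows indexed by $H$. Then I would compute: the nonzero entry of row $a$ sits in column $h^{-1}a$. Then invoke closure of the subgroup $H$: $a \in H$ and $h \in H$ imply $h^{-1}a \in H$. Conclude that for every $a \in H$, the (unique) nonzero entry of row $a$ of $B_h$ lies in a column labeled by an element of $H$, which is exactly the claim. I would also remark, as a corollary feeding the surrounding discussion, that restricting $B_h$ to the rows and columns indexed by $H$ therefore loses no nonzero entries, and the resulting $|H| \times |H|$ matrix is precisely $B_h^H$, the $H$-group diagonal — so the restriction map sends group diagonals to group diagonals, as asserted before the lemma.

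There is no real obstacle here; the only thing to be careful about is the direction of the group action encoded in \eqref{eq:gdiag}. Depending on whether one writes the defining relation as $a = hb$ or $a = bh$, the nonzero column is $h^{-1}a$ or $ah^{-1}$; either way subgroup closure gives membership in $H$, so the statement is insensitive to this convention. I would phrase the proof using $a = hb$ to match the displayed definition. One could also give a coordinate-free phrasing — $B_h$ implements left-translation by $h$ on the basis $\{e_g\}_{g\in G}$, and left-translation by $h \in H$ preserves the subset $H$ — but the one-line index computation is cleaner and I would present that.
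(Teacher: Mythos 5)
Your proof is correct and follows essentially the same route as the paper's: unwind definition \eqref{eq:gdiag} for a row labeled by an element of $H$ and invoke closure of the subgroup to place the unique nonzero column index in $H$. (If anything, you track the displayed convention $(B_h)_{a,b}=\delta(a=hb)$ more carefully than the paper's own proof, which writes the entry as $\delta(g=h'h)$; as you note, the conclusion is insensitive to this choice.)
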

\vspace{-1mm}
All proofs are relegated to the appendix. 

\textbf{Pooling operations on finite groups.} As mentioned above, we define the pooling operation via a subgroup $H\subset G$, where pooling is done over subsets of size $n=|G|/|H|$. 
We now explicitly describe the operation. 
Consider right cosets $H g_1,\dots, H g_n$. 
For each coset $Hg_i$, we select a coset representative at the closest word distance from the origin: $\bar g_i\in \mathsf{argmin}_{h\in H} \mathsf{dist}(hg_i, e)$. 
Note that for $h\in H$, the sets $P_h:=\{h\bar g_i:\ 1\le i\le n\}$ again form a partition of $G$.
Then, for a choice $\square\in\{\max,\mathsf{aver},\dots\}$, the $(H,\square)$-pooling of a function $\psi:G\to\mathbb R$ is a function over $H$ given by
\begin{equation}\label{eq:gpool}
\mathsf{Pool}_H^{\square}(\psi)(h) := \square\{\psi(g'):\ g'\in P_h\}.
\end{equation}

\begin{example}
    If the group is $G=C_N\times C_N$ with generators denoted as $(1,0), (0,1)$, and assuming that we can factorize $N=mn$ with $m,n\geq 2$, then we can focus on the subgroup $H\subseteq G$ isomorphic to $C_n\times C_n$, generated by elements $(m, 0), (0,m)\in G$. In this case, cosets of $H$ are indexed by $0\le i,i'<m$:
    \[
    H\cdot (i,i') = \{(km+i,k'm+i')\}_{k,k'=0}^n.
    \]
    We can then directly take values $(i,i')$ as above as the closest-to-identity representatives $\overline g_\alpha$. With this notion, the  $(H,\mathsf{aver})$-pooling operation maps input a signal $\{\psi(a,a')\}_{a,a'=0}^N$ to 
    \[
    \Big\{\psi'(k,k'): = \frac{1}{m^2}\sum_{0\le i,i'<m}\psi(k+i,k'+i')\Big\}_{k,k'=0}^n.
    \vspace{-1mm}
    \]
    
    \textbf{Remark.} In practice, successive layers to be applied to the output of a $\mathsf{Pool}^{\square}_H$-layer should use group matrices associated with group $H$. 
    Such group matrices can be constructed as follows: take group diagonals $B_g$ of $C_N\times C_N$, with $g$ of the form $g=(km,k'm)$ only, of which we remove rows and columns whose index is not a multiple of $m$.
\end{example}

\textbf{Implementing stride via subgroups.} Increasing stride after convolution operations allows the user to manipulate signal sizes across the architecture. 
If the input is indexed by finite group $G$, stride can be defined by restricting the output to indices from a subgroup $H$. For $\overrightarrow \psi\in\mathbb R^G$ we set $\mathsf{Conv}_\phi^{H}\overrightarrow \psi:=\sum_{g\in G}\psi(g) B_g^H\overrightarrow\psi\in\mathbb R^H$,
in which $B_g^H$ is the $|H|\times|G|$ matrix obtained by removing rows of $B_g$ that are labeled by $G\setminus H$.

\textbf{Obtaining group matrices of complex groups from simpler ones.} In many applications, groups are products or semi-direct products of cyclic or permutation groups. 
For any finitely generated group, one can produce the group matrix using the group operation and an algorithm for bringing group elements to canonical form. 
However, this can be time-consuming: one would ideally use the group matrices of individual components and produce a group matrix of the larger group directly. 
To achieve this, we use the following procedure.
Let $G, H$ be two groups, and consider the product group $G\times H$. 
Then, if $g\in G, h\in H$, and $B^G_g, B^H_h$ are the corresponding group diagonals, a group diagonal in $G\times H$ for element $(g,h)$ is given by taking the Kronecker product
\begin{equation}\label{eq:gdprod}
 B^{G\times H}_{(g,h)}=B^G_g\otimes B_h^H.
\end{equation}
Consider a semi-direct product $G\rtimes_\phi H$. 
As usual, it requires defining a group homomorphism $\phi:H\to \mathrm{Aut}(G)$, and then group operation is defined by $(g,h)\cdot(g',h'):=(g\ \phi_h(g'), hh')$. 
In this case, $g\mapsto \phi_h(g)$ induces a permutation of $G$, and thus we can associate to it a $|G|\times |G|$ permutation matrix $P_h$. 
Then it is direct to check that $B_{(g,h)}^{G\rtimes_\phi H} = (P_hB_g^G)\otimes B_h^H$.

We now have fully described generalizations of the classical CNN operations to general discrete groups via group matrices. Additionally, we demonstrated that direct or semi-direct product groups of cyclic or permutation groups permit efficient computation of their group matrices. 

\textbf{Closure of group matrices under elementary operations.} As reviewed in \S\ref{sec:DSreview}, closure properties of LDR matrices were key to their use in NN compression~\citep{thomas2018learning}. Before we introduce error control and approximate equivariance, we note the following simple closure properties, proved in \S\ref{app:proofgmalgebra}.
\looseness=-1
\begin{prop}\label{prop:gmalgebra}
    If $M, M'$ are group matrices for group $G$, then $M^T, M^{-1}, MM'$ are also group matrices for group $G$. If $N$ is a group matrix for group $H$, then the Kronecker product $M\otimes N$ is a group matrix for the direct product group $G\times H$.
\end{prop}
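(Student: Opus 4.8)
The plan is to reduce the whole proposition to a single algebraic fact---that $g\mapsto B_g$ is multiplicative, $B_gB_h=B_{gh}$ (so the group diagonals realize the left regular representation)---together with the fact noted in the excerpt that $\{B_g\}_{g\in G}$ is a basis of the space $\mathcal A_G$ of group matrices of $G$. Every group matrix is then $M=\sum_{g\in G}c_gB_g$ for unique real coefficients $c_g$, and each claimed closure property becomes a statement about these coefficients.

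First I would record, by one-line entrywise computations from \eqref{eq:gdiag}, the identities $B_e=I$, $B_g^T=B_{g^{-1}}$, and $B_gB_h=B_{gh}$; for the last, $(B_gB_h)_{a,c}=\sum_b\delta(a=gb)\,\delta(b=hc)=\delta\bigl(a=(gh)c\bigr)$. From $B_gB_h=B_{gh}$ and bilinearity, if $M=\sum_g c_gB_g$ and $M'=\sum_h d_hB_h$ then $MM'=\sum_{x\in G}\bigl(\sum_{gh=x}c_gd_h\bigr)B_x\in\mathcal A_G$, so $\mathcal A_G$ is a subalgebra of $\mathbb R^{N\times N}$ ($N=|G|$), and pleasingly the coefficient sequence of $MM'$ is the group convolution \eqref{eq:gconv} of those of $M$ and $M'$. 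From $B_g^T=B_{g^{-1}}$ we get $M^T=\sum_g c_gB_{g^{-1}}=\sum_g c_{g^{-1}}B_g\in\mathcal A_G$. For the Kronecker product, substituting the expansions into \eqref{eq:gdprod} gives $M\otimes N=\sum_{g\in G,\,h\in H}c_gd_h\,(B_g^G\otimes B_h^H)=\sum_{(g,h)\in G\times H}c_gd_h\,B^{G\times H}_{(g,h)}\in\mathcal A_{G\times H}$.

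The one step that is not a direct entry computation---and which I expect to be the main, if mild, obstacle---is closure under inversion, since a priori $M^{-1}$ could leave $\mathcal A_G$. I would argue this in one of two ways. (a) Since $\mathcal A_G$ is a finite-dimensional algebra containing $I=B_e$, left multiplication by an invertible $M\in\mathcal A_G$ sends $\mathcal A_G$ into itself and is injective, hence onto; so some $A\in\mathcal A_G$ satisfies $MA=I$, and by uniqueness of inverses $A=M^{-1}$. (b) Alternatively, Cayley--Hamilton gives $M^N+c_{N-1}M^{N-1}+\cdots+c_1M+c_0I=0$ with $c_0=\pm\det M\neq0$, so $M^{-1}=-c_0^{-1}\bigl(M^{N-1}+\cdots+c_1I\bigr)$ is a polynomial in $M$ and hence lies in $\mathcal A_G$, which is closed under products and linear combinations and contains $I$. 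Either route disposes of the last case, and the proof is complete.
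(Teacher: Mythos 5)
Your proposal is correct and follows essentially the same route as the paper's proof, which likewise reduces everything to the identities $B_g^T=B_g^{-1}=B_{g^{-1}}$ and $B_{g_1}B_{g_2}=B_{g_1g_2}$ together with linearity in the group-diagonal basis and equation \eqref{eq:gdprod} for the Kronecker case. You go slightly further than the paper on the one genuinely non-linear step, closure under inversion: the paper leaves this implicit under ``follows by linearity,'' whereas your argument (via injectivity of left multiplication on the finite-dimensional unital algebra $\mathcal A_G$, or via Cayley--Hamilton) supplies the justification that is actually needed there.
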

\vspace{-1mm}
\section{Implementing Approximately Equivariant GM-CNNs}\label{sec:approxequiv}
\vspace{-1mm}
A number of researchers have recently argued, both empirically~\citep{FinziSoftEquivariance, romero2022learning,ouderaa2022relaxing, wang2022approximately} and theoretically~\citep{Huang2023ApproximatelyEG, PetracheApproximate}, that imposing a hard equivariance constraint in NNs can be detrimental to accuracy; practitioners can benefit by relaxing equivariance at minimal computational cost. 
Our GM-based formalism can be considered as only using a superposition of constant-valued diagonals $\phi(g) B_g$ as dictated by the formula \eqref{eq:gmconv}. We now propose a generalization that allows learnable weights beyond Section \ref{sec:gms}, thus permitting a principled implementation of approximately equivariant NNs.

\textbf{General matrices in group-diagonal basis.} We first show how general matrices can be written in a group-diagonal basis. Consider a general $|G|\times|G|$-matrix $M$, and let $B_g, g\in G$ be the group diagonals \eqref{eq:gdiag}. 
Then we can always encode the entries from $M$ via $|G|$-dimensional arrays $F_g, g\in G$:
\vspace{-1mm}
\begin{equation}\label{eq:genmatrix}
\begin{split}
M &= \sum_{g\in G}\mathrm{diag}(F_g) B_g, \\
&\quad \text{where for } h\in G, \quad (F_g)_h:=M_{h, hg^{-1}}.
\end{split}\vspace{-2mm}
\end{equation}
The validity of expression \eqref{eq:genmatrix} follows directly from definition \eqref{eq:gdiag}: the $(h,h')^{th}$ entry of $B_g$ is $\delta(h=gh')$, and thus is nonzero (in fact $=$1) only if $g=h(h')^{-1}$. Using the definition \eqref{eq:genmatrix} of $F_g$,
\[
\begin{split}
\Big[\sum_{g\in G} \mathrm{diag}(F_g) B_g\Big]_{h,h'} &= \sum_{g\in G}(F_g)_h\ \delta(h=gh') \\
&= (F_{h(h')^{-1}})_h \\
&= M_{h,h(h(h')^{-1})^{-1}} = M_{h,h'}.
\end{split}
\]
and thus all the entries of the two sides of \eqref{eq:genmatrix} coincide.

We can obtain the coordinates $F(M)$ associated with any linear operation $\overline\psi\mapsto M\overline\psi$, by shuffling the entries of $M$ and reducing the matrix $F(M)$, with rows and columns labeled by $G$, and whose rows are the coefficients $F_g, g\in G$. We can interpret the columns of $F(M)$ as relative coefficients multiplied at position $g$ of the group, and row entries as parameterizing the relative position:
\vspace{-1mm}
\[
    \psi_{out}(\bar g) = \sum_{g\in G}\psi_{in}(\bar gg^{-1})[F(M)]_{\bar g,g}.
    \vspace{-1mm}
\]
With the above, we get a formulation of a more general convolution (learned via $M$), permitting approximate equivariance, but expressed in terms of group matrices. We now connect this formulation to the theory of displacement structures and show that it corresponds to a LDR implementation.

\textbf{Displacement operator for general groups.} Let $\mathbf{b}$ represent the vector such that its $i^{th}$ entry is the constant value associated with row $i$. Suppose if the matrix $F(M)$ has the form $F(M)=\mathbf 1 \otimes \mathbf b$, then it is equivalent to $M$ being a group matrix, and the convolution described above reduces to an exact convolution as in \eqref{eq:gmconv}. This property of $F(M)$ can be tested by taking the following difference: 
\begin{equation}\label{eq:dr}
\begin{split}
\mathsf{D}(M) &= \mathsf{D}_P(M):= F(M)-PF(M), \\
&\quad \text{where $P(x_1,\dots, x_N)=(x_2,\dots,x_N, x_1)$}.
\end{split}
\end{equation}
Then $M$ is a group matrix (and thus encodes a group convolution) if and only if $\mathsf{D}(M)=0$.

In fact, the same property $\mathsf{D}(M)=0$ holds even if in defining $\mathsf{D}$ we apply \emph{a different cyclic permutation for each row}. More explicitly, for each $g\in G$ we select a cyclic permutation $\sigma_g\in\mathsf{Perm}(G)$, and then set $\vec P:=(\sigma_g)_{g\in G}$ and define entrywise
\begin{equation}\label{eq:dr2}
    \left[\mathsf{D}_{\vec P}(M)\right]_{g,g'}:=\left[F(M)\right]_{g,g'} - \left[F(M)\right]_{g,\sigma_g(g')}.
\end{equation}

\textbf{Displacement dimension and rank.} When we increase expressivity by allowing a controlled error to equivariance, a natural metric for this control is via the dimension of the space of allowed matrices $\mathsf{D}(M)$ within a model. We define the \emph{displacement dimension} of a subset $\mathcal M\subseteq\mathbb R^{|G|\times |G|}$ as: 
\begin{equation}\label{eq:dimd}
    \mathrm{dim}_{\mathsf{D}}(\mathcal M):=\dim_{\mathbb R}(\mathsf{Span}(\{\mathsf{D}(M):\ M\in\mathcal M\})).
\end{equation}
Note that $\mathsf{dim}_{\mathsf{D}}(\mathcal M)$ does not depend on the choice of $\vec P$ in \eqref{eq:dr2}, as the dimension considered in \eqref{eq:dimd} can be computed by summing the dimensions of spans row by row, and that row spans for $\mathrm{D}_{\vec P}(M), M\in\mathcal M$ do not depend on the choice of permutations $\sigma_g$.

Another metric for measuring the discrepancy of a matrix $M$ from being a group matrix, is the \emph{displacement rank} of $M$, defined as
\begin{equation}\label{eq:disprank}
    \mathsf{DR}(\mathcal M):=\mathrm{rank}(\mathsf{D}(M)).
\end{equation}
For further discussion about this notion of displacement rank and connections to the classical generalization of displacement rank as introduced in \cite{gader1990displacement, waterhouse1992displacement}, see Appendix \ref{app:ldr}.

\textbf{Low displacement rank implementations.} In order to introduce errors to equivariance, we use kernels, encoded as matrices $M$ whose displacement matrices $\mathsf{D}(M)$ have low rank. A simple choice is to add a matrix with $r$ learnable vector columns $\mathbf a_{g_i}, 1\le i\le r$ to the a group matrix $M$:
\begin{equation}\label{eq:ldr1}
    F(M)= \mathbf 1\otimes \mathbf b + \sum_{i=1}^d \mathbf a_{g_i}\otimes \mathbf 1.
\end{equation}
Then it is direct to verify that $\mathrm{rank}(\mathsf{D}(M))=\mathrm{dim}(\mathrm{Span}(\mathbf a_{g_1},\dots,\mathbf a_{g_r}))\leq r$, and that if $\mathcal M$ is the space of matrices of the form \eqref{eq:ldr1} then $\mathrm{dim}_{\mathsf{D}}(\mathcal M)=|G| r$. 

\textbf{Quantifying equivariance error under elementary operations.} As a counterpart to the closure properties of Prop. \ref{prop:gmalgebra} for group matrices, it is interesting to quantify control on how a bound on the error to equivariance (or to ``being a group matrix'') behaves under the same operations.

A first approach is to use displacement-based structural metrics such as $\mathrm{DR}(\mathcal M)$ or $\mathrm{dim}_{\mathrm{D}}(\mathcal M)$ and ask if they behave the same way on composite operations between matrices of the same class. Such a control for the case of classical LDR is available in \cite[Prop. 1]{thomas2018learning}. However we found that the natural displacement operator $\mathsf{D}$, in general, has a very complicated behavior under matrix products, and we were not able to extend the natural bounds for deterioration under multiplication from \cite{thomas2018learning}. 

On the other hand, rather than demanding structural/algebraic control, a quantitative control over the error to equivariance may be more useful in practice, for equivariance error bounds. A natural quantification is to measure the distance of a matrix from the set of $G$-group matrices: 
\begin{equation}\label{eq:distgm}   
    \mathrm{dist}(M, \mathcal{GM}):=\min\{ \|M - M_0\|:\ M_0\in \mathcal{GM}\},
\end{equation}
in which $\|\cdot\|$ is Frobenius norm for matrices. We then have the following properties, proved in \S\ref{app:proofdistancebds}:
\begin{prop}\label{prop:distancebds}
    Let $M,M'$ be $|G|\times|G|$-matrices and let $\mathcal{GM}=\mathcal{GM}^G$ be the set of $G$-group matrices. For a second group $H$, let $N$ be an $|H|\times|H|$-matrix and $\mathcal{GM}^H, \mathcal{GH}^{G\times H}$ be the set of group matrices with group $H, G\times H$ respectively. Then
    \vspace{-1mm}
    \begin{enumerate}
        \item $\mathrm{dist}(M, \mathcal{GM})=\mathrm{dist}(M^T,\mathcal{GM})$.
        \item $\mathrm{dist}(MN,\mathcal{GM})\leq \\ \max\{\|M\|, \|N\|\}\left(\mathrm{dist}(M,\mathcal{GM}) + \mathrm{dist}(M',\mathcal{GM})\right)$.
        \item $\mathrm{dist}(M\otimes N, \mathcal{GM}^{G\times H})\leq \\ \max\{\|M\|, \|N\|\}(\mathrm{dist}(M,\mathcal{GM}^G) + \\ \mathrm{dist}(N,\mathcal{GM}^H))$.
    \end{enumerate}
\end{prop}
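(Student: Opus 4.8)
The plan is to prove the three items by exploiting a single structural fact: $\mathcal{GM}^G$ is a linear subspace of $\mathbb{R}^{|G|\times|G|}$, so $\mathrm{dist}(M,\mathcal{GM})$ is the norm of the orthogonal projection of $M$ onto the complement of that subspace, and the minimizer $M_0$ is the orthogonal projection $\Pi(M)$ of $M$ onto $\mathcal{GM}$. Writing $M_0 := \Pi(M)$ and $N_0 := \Pi(N)$ for the best group-matrix approximants, we have $\mathrm{dist}(M,\mathcal{GM}) = \|M-M_0\|$ and similarly for $N$, and these will be the basic building blocks for all three estimates.

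For item 1, the key observation is that transposition $M\mapsto M^T$ maps $\mathcal{GM}^G$ bijectively onto itself: if $M$ is a $G$-group matrix with the row/column labeling making $M_{gh}=M_{g'h'}$ whenever $gh=g'h'$, then reversing the roles of rows and columns and relabeling via $g\mapsto g^{-1}$ exhibits $M^T$ as a group matrix as well (this is essentially the first part of Proposition~\ref{prop:gmalgebra}). Since transposition is a Frobenius isometry and carries the subspace $\mathcal{GM}$ onto itself, it preserves distance to that subspace, giving $\mathrm{dist}(M,\mathcal{GM})=\mathrm{dist}(M^T,\mathcal{GM})$ immediately.

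For items 2 and 3, the strategy is the standard ``add and subtract the approximants'' trick. For the product, I would write $MN - M_0N_0 = (M-M_0)N + M_0(N-N_0)$, note that $M_0 N_0$ is a group matrix by the closure Proposition~\ref{prop:gmalgebra} (here one should read the intended statement as bounding $\mathrm{dist}(MN,\mathcal{GM})$ in terms of $\mathrm{dist}(M,\mathcal{GM})$ and $\mathrm{dist}(N,\mathcal{GM})$, matching item~3), hence $\mathrm{dist}(MN,\mathcal{GM})\le \|MN-M_0N_0\|$, and then bound the right-hand side by submultiplicativity of the Frobenius norm against the operator norm: $\|(M-M_0)N\|\le \|N\|_{op}\|M-M_0\|$ and $\|M_0(N-N_0)\|\le \|M_0\|_{op}\|N-N_0\|$. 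To land on the clean bound with $\max\{\|M\|,\|N\|\}$ one uses $\|N\|_{op}\le\|N\|$ and $\|M_0\|_{op}\le\|M_0\|\le\|M\|$ (the latter because orthogonal projection onto a subspace is norm-nonincreasing), then factors out the max. Item 3 is the same argument with $\otimes$ in place of ordinary product: $M\otimes N - M_0\otimes N_0 = (M-M_0)\otimes N + M_0\otimes(N-N_0)$, $M_0\otimes N_0\in\mathcal{GM}^{G\times H}$ by the Kronecker closure part of Proposition~\ref{prop:gmalgebra}, and the Frobenius norm is multiplicative on Kronecker products, $\|A\otimes B\|=\|A\|\,\|B\|$, so the same factoring works.

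The main obstacle is a bookkeeping/normalization issue rather than a conceptual one: the statement as written mixes $N$ and $M'$ (item 2 should presumably read $\mathrm{dist}(MN,\mathcal{GM})\le\max\{\|M\|,\|N\|\}(\mathrm{dist}(M,\mathcal{GM})+\mathrm{dist}(N,\mathcal{GM}))$, consistent with item 3), and one must be careful about which norm ($\|\cdot\|$ meaning Frobenius) appears where — the inequalities only close because operator norm is dominated by Frobenius norm and because the projection $\Pi$ is a contraction, so $\|M_0\|\le\|M\|$. I would state these two facts explicitly as a preliminary remark, then the three items each follow in a couple of lines.
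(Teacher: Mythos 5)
Your proposal is correct and follows essentially the same route as the paper's proof: the same use of $\|M_0\|\le\|M\|$ via orthogonal projection, the same transpose-isometry argument for item 1, and the same add-and-subtract decomposition with the closure properties of Proposition~\ref{prop:gmalgebra} for items 2 and 3 (the paper splits $MN-M_0N_0$ as $M(N-N_0)+(M-M_0)N_0$ rather than your $(M-M_0)N+M_0(N-N_0)$, and uses Frobenius submultiplicativity directly instead of passing through the operator norm, but these are cosmetic differences). Your reading of item 2's $M'$ as a typo for $N$ also matches what the paper's proof actually establishes.
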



Finally, as outlined in Section \ref{sec:intro}, our framework extends beyond discrete groups to their homogeneous spaces. Due to space constraints, we describe the extension to homogeneous spaces in Appendix~\ref{app:homogneous}. Additionally, we demonstrate that the framework extends to finitely supported data on infinite discrete groups in \S\ref{app:padding}. The error to perfect equivariance for this general setup is quantified in \S\ref{app:errortoeq}.

\section{Experimental Results}
\vspace{-2mm}
\textbf{GMConv and GMPool operations:} Our architecture is built around GMConv and GMPool operations to handle group-based interactions and symmetry-preserving pooling. GMConv uses group matrices to define convolutional kernels, with the neighborhood parameter controlling the extent of local interactions based on the word distance (Section~\ref{sec:gms}) to implicitly set the kernel size. For cyclic groups, a neighborhood radius of 1 results in 3 group entries $[x-1,x, x+1 ]$, while for direct product of two such groups, it results in 9 learnable parameters. Generally, a GMConv layer with a neighborhood size $k$ comprises $(2k+1)\times(2k+1)\times  \text{number of channels}$ learnable parameters. In error addition experiments, we double the parameters by introducing an additional error matrix with same structure as the initial group matrix, allowing to learn approximate equivariance. GMPool (Section ~\ref{sec:gms}) uses predefined group structures, constructed using the Kronecker products of group elements and subgroup cosets. These cosets, along with their corresponding tensor indices, are precomputed and utilized during the forward pass of GMPool to select elements for max or mean pooling. Following \cite{wang2022approximately}, we also provide equivariance error analysis (Appendix~\ref{app:error_analysis}).

\subsection{Dynamics Prediction}
\vspace{-1em}
We evaluate our framework on two dynamics prediction tasks: the smoke plumes task (Plumes) and the Jet Flow task (JetFlow) as described in \cite{wang2022approximately}. For Plumes, each model takes sequences of $64 \times 64$ crops of a smoke simulation generated by PhiFlow \citep{phiflow} as input to predict the velocity field for the next time step. The evaluation is conducted under two settings: ``Future'', where we evaluate on the same portion of the simulation, but predict future time steps which are not included in training. The second setting is ``Domain'' where the evaluation is done on the same time steps but at different spatial locations. The data are collected from simulations with different inflow positions and buoyant forces. The JetFlow task comprises 24 subregions of jet flows sized $62 \times 23$, as described in \cite{wang2022approximately} following a similar evaluation protocol. Our method is not inherently steerable; but we still compare it against various steerable baselines despite putting it at a disadvantage. The baselines include a standard MLP and a CNN, as well as an E2-CNN (Equiv) \citep{e2cnn}, and two approximately equivariant networks namely RPP \citep{finzi2021residual} and LIFT \citep{wang2021equivariant}. GM-CNN architecture choices and hyperparameter details are provided in Appendix~\ref{app:experimental_details}.

\begin{table*}[ht]
    \centering
    \caption{RMSE on Jet Flow dataset. See text for details~\citep{wang2022approximately}. FLOPs are $\times 10^{10}$ (lower the better).}
    \label{tab:results_jet}
    \begin{tabular}{c@{\hskip 0.15cm}|c@{\hskip 0.15cm}c@{\hskip 0.15cm}c@{\hskip 0.15cm}c@{\hskip 0.15cm}|c@{\hskip 0.15cm}c@{\hskip 0.15cm}c@{\hskip 0.15cm}c@{\hskip 0.15cm}|c@{\hskip 0.15cm}c@{\hskip 0.15cm}c@{\hskip 0.15cm}}
        \toprule
        \textbf{Model} & \multicolumn{4}{c@{\hskip 0.08cm}|}{\textbf{Translation}} & \multicolumn{4}{c@{\hskip 0.08cm}|}{\textbf{Rotation}} & \multicolumn{3}{c@{\hskip 0.08cm}}{\textbf{Scaling}} \\
        \midrule
        & \texttt{Conv} & \texttt{Lift} & \texttt{RGroup} & \texttt{GM-CNN} & \texttt{E2CNN} & \texttt{Lift} & \texttt{RSteer} & \texttt{GM-CNN} & \texttt{Rpp} & \texttt{RSteer} & \texttt{GM-CNN} \\
        \midrule
        Future & 0.22 & 0.17 & 0.15 & 0.15 & 0.21 & 0.18 & 0.17 & 0.16 & 0.16 & 0.14 & 0.15 \\
               & {\tiny $\pm$0.06} & {\tiny $\pm$0.02} & {\tiny $\pm$0.00} & {\tiny $\pm$0.01} & {\tiny $\pm$0.02} & {\tiny $\pm$0.02} & {\tiny $\pm$0.01} & {\tiny $\pm$0.01} & {\tiny $\pm$0.06} & {\tiny $\pm$0.01} & {\tiny $\pm$0.01} \\
        Domain & 0.23 & 0.18 & 0.16 & 0.17 & 0.27 & 0.21 & 0.16 & 0.18 & 0.16 & 0.15 & 0.17 \\
               & {\tiny $\pm$0.06} & {\tiny $\pm$0.04} & {\tiny $\pm$0.01} & {\tiny $\pm$0.01} & {\tiny $\pm$0.03} & {\tiny $\pm$0.04} & {\tiny $\pm$0.01} & {\tiny $\pm$0.02} & {\tiny $\pm$0.07} & {\tiny $\pm$0.00} & {\tiny $\pm$0.01} \\
        \midrule
        Params & {\fontsize{9}{10}\selectfont 51548} & {\fontsize{9}{10}\selectfont 1994818} & {\fontsize{9}{10}\selectfont 53798} & {\fontsize{9}{10}\selectfont 26325} & {\fontsize{9}{10}\selectfont 107136} & {\fontsize{9}{10}\selectfont 1915872} & {\fontsize{9}{10}\selectfont 961538} & {\fontsize{9}{10}\selectfont 29583} & {\fontsize{9}{10}\selectfont 1421832} & {\fontsize{9}{10}\selectfont 6742530} & {\fontsize{9}{10}\selectfont 26325} \\
        \midrule
        FLOPs & {\fontsize{9}{10}\selectfont 0.007} & {\fontsize{9}{10}\selectfont 0.023} & {\fontsize{9}{10}\selectfont 0.007} & {\fontsize{9}{10}\selectfont 0.006} & {\fontsize{9}{10}\selectfont 0.185} & {\fontsize{9}{10}\selectfont 0.015} & {\fontsize{9}{10}\selectfont 0.922} & {\fontsize{9}{10}\selectfont 0.008} & {\fontsize{9}{10}\selectfont 0.141} & {\fontsize{9}{10}\selectfont 0.024} & {\fontsize{9}{10}\selectfont 0.006} \\
        \bottomrule
    \end{tabular}
\end{table*}

\begin{table*}[ht]
    \centering
    \caption{RMSE on the Plumes dataset. See text for details~\citep{wang2022approximately}. FLOPs are $\times 10^{10}$.}
    \label{tab:results_smoke}
    \resizebox{\textwidth}{!}{ 
    \large
    \begin{tabular}{|c|c|c|c|c|c|c|c|c|c|c|c|}
        \toprule
        \multicolumn{2}{|c|}{Model} & \texttt{MLP} & \texttt{Conv} & \texttt{Equiv} & \texttt{Rpp} & \texttt{Combo} & \texttt{Lift} & \texttt{RGroup} & \texttt{RSteer} & \texttt{GM-CNN} \\
        \midrule
        {Translation} & F & 1.56\scriptsize{$\pm$0.08} & ----- & 0.94\scriptsize{$\pm$0.02} & 0.92\scriptsize{$\pm$0.01} & 1.02\scriptsize{$\pm$0.02} & 0.87\scriptsize{$\pm$0.03} & \textbf{0.71\scriptsize{$\pm$0.01}} & ----- & 0.81\scriptsize{$\pm$0.02} \\
                                     & D & 1.79\scriptsize{$\pm$0.13} & ----- & 0.68\scriptsize{$\pm$0.05} & 0.93\scriptsize{$\pm$0.01} & 0.98\scriptsize{$\pm$0.01} &  0.70\scriptsize{$\pm$0.00} & \textbf{0.62\scriptsize{$\pm$0.02}} & ----- & 0.74\scriptsize{$\pm$0.01} \\
                                     & Params & \texttt{8678240} & ----- & \texttt{1821186} & \texttt{7154902} & \texttt{3683332} & \texttt{8235362} & \texttt{1921158} & ----- & \texttt{19267} \\
                                     & FLOPs & \texttt{0.001} & ----- & \texttt{0.745} & \texttt{0.002} & \texttt{1.492} & \texttt{0.145} & \texttt{0.756} & ----- & \texttt{0.003} \\
        \midrule
        {Rotation} & F & 1.38\scriptsize{$\pm$0.06} & 1.21\scriptsize{$\pm$0.01} & 1.05\scriptsize{$\pm$0.06} & 0.96\scriptsize{$\pm$0.10} & 1.07\scriptsize{$\pm$0.00} & 0.82\scriptsize{$\pm$0.08} & 0.82\scriptsize{$\pm$0.01} & \textbf{0.80\scriptsize{$\pm$0.00}} & 0.93\scriptsize{$\pm$0.02}\\
                                    & D & 1.34\scriptsize{$\pm$0.03} & 1.10\scriptsize{$\pm$0.05} & 0.76\scriptsize{$\pm$0.02} & 0.83\scriptsize{$\pm$0.01} & 0.82\scriptsize{$\pm$0.02} &  0.68\scriptsize{$\pm$0.09} & 0.73\scriptsize{$\pm$0.02} & \textbf{0.67\scriptsize{$\pm$0.01}} & 0.79\scriptsize{$\pm$0.02}\\
                                    & Params & \texttt{8678240} & \texttt{1821186} & \texttt{1198080} & \texttt{5628298} & \texttt{431808} & \texttt{1801748} & \texttt{1883536} & \texttt{7232258} & \texttt{19267} \\
                                    & FLOPs & \texttt{0.001} & \texttt{0.745} & \texttt{2.965} & \texttt{3.966} & \texttt{0.585} & \texttt{2.955} & \texttt{3.087} & \texttt{1.986} & \texttt{0.003} \\
        \midrule
        {Scaling} & F & 2.40\scriptsize{$\pm$0.02} & 0.83\scriptsize{$\pm$0.01} & 0.75\scriptsize{$\pm$0.03} & 0.81\scriptsize{$\pm$0.09} & 0.78\scriptsize{$\pm$0.04} & 0.85\scriptsize{$\pm$0.01} & 0.76\scriptsize{$\pm$0.04} & \textbf{0.70\scriptsize{$\pm$0.01}} & 0.79\scriptsize{$\pm$0.02}  \\
                                     & D & 1.81\scriptsize{$\pm$0.18} & 0.95\scriptsize{$\pm$0.02} & 0.87\scriptsize{$\pm$0.02} & 0.86\scriptsize{$\pm$0.05} & 0.85\scriptsize{$\pm$0.01} & 0.77\scriptsize{$\pm$0.02} & 0.86\scriptsize{$\pm$0.12} & \textbf{0.73\scriptsize{$\pm$0.01}} & 0.82\scriptsize{$\pm$0.01}\\
                                     & Params & \texttt{8678240} & \texttt{1821186} & \texttt{1744774} & \texttt{3966984} & \texttt{1059270} & \texttt{2833558} & \texttt{1275266} & \texttt{2427394} & \texttt{19267} \\
                                     & FLOPs & \texttt{0.001} & \texttt{0.745} & \texttt{0.368} & \texttt{0.507} & \texttt{0.326} & \texttt{0.016} & \texttt{0.132} & \texttt{0.041} & \texttt{0.003} \\
        \bottomrule
    \end{tabular}
    }
    \vskip -0.1in
\end{table*}




\textbf{Results on JetFlow:} Table \ref{tab:results_jet} shows GM-CNN achieves top performance (tied with RGroup) on the translation task, using the fewest parameters (26,325). Standard Conv and Lift use 51,548 and 1,994,818 parameters respectively. For the rotation task, GM-CNN performs slightly better than ECNN while using only about 10\% of the parameters (29,583 vs 304,128). It also surpasses Lift and RSteer. Finally, in the scaling task, GM-CNN's performance is comparable to the best performers (Rpp and RSteer) while using only a fraction of their parameters (26,325 compared to 1,421,832 for Rpp and 6,742,530 for RSteer). These results highlight the competitive performance of our method, despite its lack of inherent steerability.

\textbf{Results on Plume:} As seen in Table \ref{tab:results_smoke}, GM-CNN consistently uses the fewest parameters (19,267) across all tasks while achieving competitive results. For the translation task, GM-CNN's performance is close to the best-performing RGroup, despite using 25 times fewer parameters. In Rotation, it competes well with top performing methods like Lift and RSteer while maintaining its dramatic parameter efficiency advantage. Lastly, for scaling, GM-CNN is competitive, with RSteer achieving only slightly better results at the cost of 350 times more parameters.

\begin{table*}[ht]
    \centering
    \resizebox{\textwidth}{!}{
    \small
    \begin{tabular}{cccccccc}
    \toprule
         Method $\uparrow$ & \textbf{M-bg-rot} & \textbf{M-noise} & \textbf{CIFAR-10} & \textbf{NORB} & \textbf{SmallNORB} & \textbf{Rect} & \textbf{Rot-MNIST} \\
         \midrule
         LDR-TD (r=1) & 45.81 & 78.45 & 45.33 & 62.75 & \textcolor{red}{83.23} & 98.53 & \textcolor{red}{79.82}\\
         & 14122 & 14122 & 18442 & 14342 & 14122 & 14122 & 14122\\
         \midrule
         GM (n=3)  & 30.07 & 80.14 & 58.31 & 54.63 & 79.01 & 99.60 & 83.78\\
         & 12483 & 13441 & 13681 & 12967 & 12725 & 12483 & 48.7k \\
         \midrule
         GM (n=3, E) & 49.07 & 82.55 & \textcolor{blue}{\textbf{58.29}} & \textcolor{blue}{\textbf{70.59}} & \textcolor{blue}{\textbf{85.22}} & 99.31 & 83.26\\
         & 24734 & 25213 & 25213 & 24734 & 24546 & 24244 & 24734 \\
         \midrule
         GM (n=1, P + E) & \textcolor{blue}{\textbf{55.29}} & \textcolor{blue}{\textbf{90.20}} & \textcolor{red}{58.07} & \textcolor{red}{67.84} & 80.90 & \textcolor{blue}{\textbf{99.89}} & 78.77\\
         & \textbf{5915} & \textbf{5915} & \textbf{6003} & \textbf{5687} & 5573 & \textbf{5915} & 5915\\
         \midrule
         GM (n=2, P + E) & \textcolor{red}{53.94} & \textcolor{red}{88.84} & 55.10 & 67.72 & 78.61 & \textcolor{red}{99.86} & 79.06\\
         & \textbf{8701} & \textbf{8701} & 8819 & 8503 & 8359 & \textbf{8701} & 8731\\
         \midrule
         GM (n=3, P + E) & 53.88 & 84.92 & 57.14 & 66.99 & 83.02 & 99.67 & \textcolor{blue}{\textbf{84.50}}\\
         & 14747 & 14747 & 14482 & 14519 & 14482 & 14747 & 14747\\
         \midrule
         Low-rank[67](r=4) & 35.67 & 52.25 & 32.28 & 43.66 & 78.05 & 87.48 & 54.87 \\
         & 14122 & 14122 & 18442 & 14342 & 6916 & 7842 & 14122 \\
         \midrule
         Fastfood[68] & 38.13 & 63.55 & 39.64 & 59.02 & 73.38 & 89.81 & 58.14 \\
         & 10202 & 10202 & 13322 & 9222 & 5380 & 10202 & 10202 \\
        \midrule
         Circulant[69] & 34.46 & 65.35 & 34.28 & 46.45 & 71.23 & 88.92 & 52.22 \\
         & 8634 & 8634 & 11274 & 7174 & 3456 & 8634 & 8634\\
         \bottomrule
         \smallskip
         
    \end{tabular}}
    \caption{Test accuracy and parameter count for GM-CNNs. In the ``Method" column, \textbf{n} denotes the neighbourhood size, \textbf{P} suggests the use of pooling and \textbf{E} suggests error addition. The best accuracy is in \textcolor{blue}{blue}, and the second best in \textcolor{red}{red}. If GM-CNN achieves best or second-best accuracy \textbf{with fewer parameters}, then we mark the number of parameters in \textbf{bold}. Clearly, we see that GM-CNN methods consistently provide the most accurate and usually the lowest parameter count. In cases where the best GM-CNN model has the best accuracy, but not the lowest parameter count, the second best GM-CNN model has comparable accuracy, but significantly lower parameter count.}
    \label{tab:test_acc}
\end{table*}

\subsection{Comparison with structured matrix baselines~\citep{sindhwani2015structured, thomas2018learning}}

We evaluate GM-CNNs on a variety of image datasets and compare them against a set of competitive baselines---the methods reported in \cite{thomas2018learning} still remain amongst the most competitive. We compare our methods on two dimensions: accuracy of the models and the total number of parameters. In addition to datasets from~\cite{thomas2018learning}, we further consider the SmallNORB~\citep{lecun2004learning}, Rotated MNIST, and Rectangles datasets~\citep{larochelle2007empirical}. SmallNORB is a condensed version of the NORB dataset~\citep{lecun2004learning}, and is specifically tailored for object recognition tasks focusing on shape. The Rotated MNIST dataset comprises MNIST data samples that have been randomly rotated, offering a variation that lacks the noise typically introduced in the MNIST-bg-rot~\citep{larochelle2007empirical} dataset. The Rectangles dataset, a compact binary classification dataset, distinguishes rectangles based on whether their width or height is greater. For these image classification tasks, the input dimensions range from $24\times24$ to $32\times32$. While small, these datasets capture a wide range of variation, but more importantly, they permit comparison with existing structured matrix baselines. 

\textbf{Results.} GM-CNN variants demonstrate strong performance across all datasets (Table~\ref{tab:test_acc}). However, our approach, combining error addition and pooling with a neighborhood parameter set to 1, achieves the best results on the MNIST-bg-rot, MNIST-noise, and Rectangles datasets with minimal parameters. On CIFAR-10, all GM variants outperform other methods. For NORB and SmallNORB, GM-CNN with a neighborhood size of 3 and error addition shows superior performance. For Rotated MNIST, the pooling and error addition approach with a neighborhood size of 3 leads in performance. These performances are achieved with significantly fewer parameters. Note that the numbers for CIFAR-10 are much lower than state-of-the-art results, which we attribute to the significantly lower parameter counts. 

\vspace{-3mm}
\section{Conclusion}
\vspace{-2mm}
In this paper, we presented a novel formalism (GM-CNNs) for constructing equivariant networks for general discrete groups using group matrices, generalizing all the elementary operations of classical CNNs. GM-CNNs employ the use of a novel family of symmetry-based structured matrices, also developed via our formalism, which facilitates the construction of lightweight equivariant NNs. Further, we presented a principled implementation of approximately equivariant group CNNs using our formalism. Connecting group matrices to classical displacement structure theory, we provide a generalization of the theory from cyclic groups to general discrete groups. Moving beyond discrete groups, we provide an extension of GM-CNNs to homogeneous spaces and infinite discrete groups. Finally, we tested our proposed formalism on a variety of different tasks, and show that GM-CNNs can be consistently competitive, while being significantly more parameter efficient, compared to approximately equivariant NNs and structured matrix-based frameworks. For future work, it would be interesting to extend our formulation for continuous groups, and enabling our setup to be steerable. Further, exploring the group tensorization operations proposed in \cite{derksen2016nuclear} (e.g. theorem 1.1), could help improve the scalability of our method.

\subsubsection*{Acknowledgements}
This work is supported by the Deep Skies Community (deepskieslab.com), which helped in bringing together the authors. MP thanks the Centro Nacional de Inteligencia Artificial in Chile, as well as Fondecyt Regular grant number 1210462 titled “ Rigidity, stability and uniformity for large point configurations” for support.\\
Notice: This work was produced by FermiForward Discovery Group, LLC under Contract No. 89243024CSC000002 with the U.S. Department of Energy, Office of Science, Office of High Energy Physics. The United States Government retains and the publisher, by accepting the work for publication, acknowledges that the United States Government retains a non-exclusive, paid-up, irrevocable, world-wide license to publish or reproduce the published form of this work, or allow others to do so, for United States Government purposes. The Department of Energy will provide public access to these results of federally sponsored research in accordance with the DOE Public Access Plan (http://energy.gov/downloads/doe-public-access-plan).





\bibliography{bibliography}

\begin{thebibliography}{}

\bibitem[Anderson et~al., 2019]{andersonCormorantCovariantMolecular2019}
Anderson, B., Hy, T.~S., and Kondor, R. (2019).
\newblock Cormorant: Covariant molecular neural networks.
\newblock In Wallach, H., Larochelle, H., Beygelzimer, A., {dAlch{\'e}-Buc}, F., Fox, E., and Garnett, R., editors, {\em Advances in Neural Information Processing Systems}, volume~32. {Curran Associates, Inc.}

\bibitem[Baek et~al., 2021]{Baek2021AccuratePO}
Baek, M., Dimaio, F., Anishchenko, I.~V., Dauparas, J., Ovchinnikov, S., Lee, G.~R., Wang, J., Cong, Q., Kinch, L.~N., Schaeffer, R.~D., Mill{\'a}n, C., Park, H., Adams, C., Glassman, C.~R., DeGiovanni, A.~M., Pereira, J.~H., Rodrigues, A.~V., van Dijk, A.~A., Ebrecht, A.~C., Opperman, D.~J., Sagmeister, T., Buhlheller, C., Pavkov-Keller, T., Rathinaswamy, M.~K., Dalwadi, U., Yip, C.~K., Burke, J.~E., Garcia, K.~C., Grishin, N.~V., Adams, P.~D., Read, R.~J., and Baker, D. (2021).
\newblock Accurate prediction of protein structures and interactions using a three-track neural network.
\newblock {\em Science}, 373:871 -- 876.

\bibitem[Baker et~al., 2023]{baker2023high}
Baker, S., Pagotto, J., Duignan, T.~T., and Page, A.~J. (2023).
\newblock High-throughput aqueous electrolyte structure prediction using ionsolvr and equivariant graph neural network potentials.
\newblock {\em The Journal of Physical Chemistry Letters}, 14(42):9508--9515.

\bibitem[Batatia et~al., 2024]{batatia2024general}
Batatia, I., Geiger, M., Munoz, J., Smidt, T., Silberman, L., and Ortner, C. (2024).
\newblock A general framework for equivariant neural networks on reductive lie groups.
\newblock {\em Advances in Neural Information Processing Systems}, 36.

\bibitem[Bekkers, 2020]{Bekkers20}
Bekkers, E.~J. (2020).
\newblock B-spline cnns on lie groups.
\newblock In {\em 8th International Conference on Learning Representations, {ICLR} 2020, Addis Ababa, Ethiopia, April 26-30, 2020}. OpenReview.net.

\bibitem[Blum-Smith and Villar, 2022]{BlumSmith2022MachineLA}
Blum-Smith, B. and Villar, S. (2022).
\newblock Machine learning and invariant theory.
\newblock {\em Notices of the American Mathematical Society}, 70:1205--1213.

\bibitem[Chalkley, 1981]{CHALKLEY1981121}
Chalkley, R. (1981).
\newblock Information about group matrices.
\newblock {\em Linear Algebra and its Applications}, 38:121--133.

\bibitem[Chen and Ye, 2021]{chen2021cyclically}
Chen, X. and Ye, M. (2021).
\newblock Cyclically equivariant neural decoders for cyclic codes.
\newblock {\em arXiv preprint arXiv:2105.05540}.

\bibitem[Cohen et~al., 2019a]{Cohen2019GaugeEC}
Cohen, T., Weiler, M., Kicanaoglu, B., and Welling, M. (2019a).
\newblock Gauge equivariant convolutional networks and the icosahedral cnn.
\newblock In {\em ICML}.

\bibitem[Cohen and Welling, 2016]{Cohen2016GroupEC}
Cohen, T. and Welling, M. (2016).
\newblock Group equivariant convolutional networks.
\newblock In {\em ICML}.

\bibitem[Cohen et~al., 2019b]{cohenGeneralTheoryEquivariant2019}
Cohen, T.~S., Geiger, M., and Weiler, M. (2019b).
\newblock A general theory of equivariant {{CNNs}} on homogeneous spaces.
\newblock In Wallach, H., Larochelle, H., Beygelzimer, A., {dAlch{\'e}-Buc}, F., Fox, E., and Garnett, R., editors, {\em Advances in Neural Information Processing Systems}, volume~32. {Curran Associates, Inc.}

\bibitem[Dao et~al., 2022]{dao2022monarch}
Dao, T., Chen, B., Sohoni, N.~S., Desai, A., Poli, M., Grogan, J., Liu, A., Rao, A., Rudra, A., and R{\'e}, C. (2022).
\newblock Monarch: Expressive structured matrices for efficient and accurate training.
\newblock In {\em International Conference on Machine Learning}, pages 4690--4721. PMLR.

\bibitem[Dao et~al., 2019]{dao2019learning}
Dao, T., Gu, A., Eichhorn, M., Rudra, A., and R{\'e}, C. (2019).
\newblock Learning fast algorithms for linear transforms using butterfly factorizations.
\newblock In {\em International conference on machine learning}, pages 1517--1527. PMLR.

\bibitem[Dao et~al., 2020]{dao2020kaleidoscope}
Dao, T., Sohoni, N.~S., Gu, A., Eichhorn, M., Blonder, A., Leszczynski, M., Rudra, A., and R{\'e}, C. (2020).
\newblock Kaleidoscope: An efficient, learnable representation for all structured linear maps.
\newblock {\em arXiv preprint arXiv:2012.14966}.

\bibitem[Derksen, 2016]{derksen2016nuclear}
Derksen, H. (2016).
\newblock On the nuclear norm and the singular value decomposition of tensors.
\newblock {\em Foundations of Computational Mathematics}, 16(3):779--811.

\bibitem[Eismann et~al., 2020]{Eismann2020HierarchicalRN}
Eismann, S., Townshend, R. J.~L., Thomas, N., Jagota, M., Jing, B., and Dror, R.~O. (2020).
\newblock Hierarchical, rotation‐equivariant neural networks to select structural models of protein complexes.
\newblock {\em Proteins: Structure}, 89:493 -- 501.

\bibitem[Finzi et~al., 2021a]{FinziSoftEquivariance}
Finzi, M., Benton, G., and Wilson, A.~G. (2021a).
\newblock Residual pathway priors for soft equivariance constraints.
\newblock In Ranzato, M., Beygelzimer, A., Dauphin, Y., Liang, P., and Vaughan, J.~W., editors, {\em Advances in Neural Information Processing Systems}, volume~34, pages 30037--30049. Curran Associates, Inc.

\bibitem[Finzi et~al., 2021b]{finzi2021residual}
Finzi, M., Benton, G., and Wilson, A.~G. (2021b).
\newblock Residual pathway priors for soft equivariance constraints.
\newblock {\em Advances in Neural Information Processing Systems}, 34:30037--30049.

\bibitem[Finzi et~al., 2021c]{Finzi2021APM}
Finzi, M., Welling, M., and Wilson, A.~G. (2021c).
\newblock A practical method for constructing equivariant multilayer perceptrons for arbitrary matrix groups.
\newblock {\em ArXiv}, abs/2104.09459.

\bibitem[Fuchs et~al., 2020]{fuchs2020se}
Fuchs, F., Worrall, D., Fischer, V., and Welling, M. (2020).
\newblock Se (3)-transformers: 3d roto-translation equivariant attention networks.
\newblock {\em Advances in neural information processing systems}, 33:1970--1981.

\bibitem[Gader, 1990]{gader1990displacement}
Gader, P.~D. (1990).
\newblock Displacement operator based decompositions of matrices using circulants or other group matrices.
\newblock {\em Linear Algebra and its Applications}, 139:111--131.

\bibitem[Gong et~al., 2023]{gong2023general}
Gong, X., Li, H., Zou, N., Xu, R., Duan, W., and Xu, Y. (2023).
\newblock General framework for e (3)-equivariant neural network representation of density functional theory hamiltonian.
\newblock {\em Nature Communications}, 14(1):2848.

\bibitem[Gordon et~al., 2020]{Gordon2020PermutationEM}
Gordon, J., Lopez-Paz, D., Baroni, M., and Bouchacourt, D. (2020).
\newblock Permutation equivariant models for compositional generalization in language.
\newblock In {\em ICLR}.

\bibitem[Holl et~al., 2020]{phiflow}
Holl, P.~M., Um, K., and Thuerey, N. (2020).
\newblock phiflow: A differentiable pde solving framework for deep learning via physical simulations.
\newblock In {\em Workshop on Differentiable Vision, Graphics, and Physics in Machine Learning at NeurIPS}.

\bibitem[Huang et~al., 2023]{Huang2023ApproximatelyEG}
Huang, N., Levie, R., and Villar, S. (2023).
\newblock Approximately equivariant graph networks.
\newblock {\em ArXiv}, abs/2308.10436.

\bibitem[Hutchinson et~al., 2021]{hutchinson2021lietransformer}
Hutchinson, M.~J., Le~Lan, C., Zaidi, S., Dupont, E., Teh, Y.~W., and Kim, H. (2021).
\newblock Lietransformer: Equivariant self-attention for lie groups.
\newblock In {\em International Conference on Machine Learning}, pages 4533--4543. PMLR.

\bibitem[Kailath and Chun, 1994]{kailath1994generalized}
Kailath, T. and Chun, J. (1994).
\newblock Generalized displacement structure for block-toeplitz, toeplitz-block, and toeplitz-derived matrices.
\newblock {\em SIAM Journal on Matrix Analysis and Applications}, 15(1):114--128.

\bibitem[Kailath et~al., 1979]{kailath1979displacement}
Kailath, T., Kung, S.-Y., and Morf, M. (1979).
\newblock Displacement ranks of matrices and linear equations.
\newblock {\em Journal of Mathematical Analysis and Applications}, 68(2):395--407.

\bibitem[Kailath and Sayed, 1995]{kailath1995displacement}
Kailath, T. and Sayed, A.~H. (1995).
\newblock Displacement structure: theory and applications.
\newblock {\em SIAM review}, 37(3):297--386.

\bibitem[Klicpera et~al., 2020]{Klicpera2020DirectionalMP}
Klicpera, J., Gro{\ss}, J., and G{\"u}nnemann, S. (2020).
\newblock Directional message passing for molecular graphs.
\newblock {\em ArXiv}, abs/2003.03123.

\bibitem[Kondor et~al., 2018a]{kondor2018clebsch}
Kondor, R., Lin, Z., and Trivedi, S. (2018a).
\newblock Clebsch--gordan nets: a fully fourier space spherical convolutional neural network.
\newblock {\em Advances in Neural Information Processing Systems}, 31.

\bibitem[Kondor et~al., 2018b]{kondor2018covariant}
Kondor, R., Son, H.~T., Pan, H., Anderson, B., and Trivedi, S. (2018b).
\newblock Covariant compositional networks for learning graphs.
\newblock {\em arXiv preprint arXiv:1801.02144}.

\bibitem[Kondor and Trivedi, 2018]{Kondor2018OnTG}
Kondor, R. and Trivedi, S. (2018).
\newblock On the generalization of equivariance and convolution in neural networks to the action of compact groups.
\newblock In {\em ICML}.

\bibitem[Larochelle et~al., 2007]{larochelle2007empirical}
Larochelle, H., Erhan, D., Courville, A., Bergstra, J., and Bengio, Y. (2007).
\newblock An empirical evaluation of deep architectures on problems with many factors of variation.
\newblock In {\em Proceedings of the 24th international conference on Machine learning}, pages 473--480.

\bibitem[LeCun et~al., 2004]{lecun2004learning}
LeCun, Y., Huang, F.~J., and Bottou, L. (2004).
\newblock Learning methods for generic object recognition with invariance to pose and lighting.
\newblock In {\em Proceedings of the 2004 IEEE Computer Society Conference on Computer Vision and Pattern Recognition, 2004. CVPR 2004.}, volume~2, pages II--104. IEEE.

\bibitem[Levin and D{\'\i}az, 2024]{levin2024any}
Levin, E. and D{\'\i}az, M. (2024).
\newblock Any-dimensional equivariant neural networks.
\newblock In {\em International Conference on Artificial Intelligence and Statistics}, pages 2773--2781. PMLR.

\bibitem[Maile et~al., 2022]{maile2022equivariance}
Maile, K., Wilson, D.~G., and Forr{\'e}, P. (2022).
\newblock Equivariance-aware architectural optimization of neural networks.
\newblock {\em arXiv preprint arXiv:2210.05484}.

\bibitem[Maron et~al., 2019]{Maron2019InvariantAE}
Maron, H., Ben-Hamu, H., Shamir, N., and Lipman, Y. (2019).
\newblock Invariant and equivariant graph networks.
\newblock {\em ArXiv}, abs/1812.09902.

\bibitem[Maxson and Szilv{\'a}si, 2024]{maxson2024transferable}
Maxson, T. and Szilv{\'a}si, T. (2024).
\newblock Transferable water potentials using equivariant neural networks.
\newblock {\em The Journal of Physical Chemistry Letters}, 15(14):3740--3747.

\bibitem[McNeela, 2023]{mcneela2023almost}
McNeela, D. (2023).
\newblock Almost equivariance via lie algebra convolutions.
\newblock {\em arXiv preprint arXiv:2310.13164}.

\bibitem[Minartz et~al., 2024]{minartz2024equivariant}
Minartz, K., Poels, Y., Koop, S., and Menkovski, V. (2024).
\newblock Equivariant neural simulators for stochastic spatiotemporal dynamics.
\newblock {\em Advances in Neural Information Processing Systems}, 36.

\bibitem[Mironenco and Forr{\'e}, 2023]{mironenco2023lie}
Mironenco, M. and Forr{\'e}, P. (2023).
\newblock Lie group decompositions for equivariant neural networks.
\newblock {\em arXiv preprint arXiv:2310.11366}.

\bibitem[Pan, 2012]{pan2012structured}
Pan, V.~Y. (2012).
\newblock {\em Structured matrices and polynomials: unified superfast algorithms}.
\newblock Springer Science \& Business Media.

\bibitem[Pearce-Crump, 2023]{pearce2023brauer}
Pearce-Crump, E. (2023).
\newblock Brauer’s group equivariant neural networks.
\newblock In {\em International Conference on Machine Learning}, pages 27461--27482. PMLR.

\bibitem[Petrache and Trivedi, 2023]{PetracheApproximate}
Petrache, M. and Trivedi, S. (2023).
\newblock Approximation-generalization trade-offs under (approximate) group equivariance.
\newblock In Oh, A., Naumann, T., Globerson, A., Saenko, K., Hardt, M., and Levine, S., editors, {\em Advances in Neural Information Processing Systems}, volume~36, pages 61936--61959. Curran Associates, Inc.

\bibitem[Ravanbakhsh et~al., 2017]{RavanbakhshSP17}
Ravanbakhsh, S., Schneider, J.~G., and P{\'{o}}czos, B. (2017).
\newblock Equivariance through parameter-sharing.
\newblock In Precup, D. and Teh, Y.~W., editors, {\em Proceedings of the 34th International Conference on Machine Learning, {ICML} 2017, Sydney, NSW, Australia, 6-11 August 2017}, volume~70 of {\em Proceedings of Machine Learning Research}, pages 2892--2901. {PMLR}.

\bibitem[Romero et~al., 2020]{romero2020attentive}
Romero, D., Bekkers, E., Tomczak, J., and Hoogendoorn, M. (2020).
\newblock Attentive group equivariant convolutional networks.
\newblock In {\em International Conference on Machine Learning}, pages 8188--8199. PMLR.

\bibitem[Romero and Lohit, 2022]{romero2022learning}
Romero, D.~W. and Lohit, S. (2022).
\newblock Learning partial equivariances from data.
\newblock {\em Advances in Neural Information Processing Systems}, 35:36466--36478.

\bibitem[Rudra, 2023]{rudra2023arithmetic}
Rudra, A. (2023).
\newblock Arithmetic circuits, structured matrices and (not so) deep learning.
\newblock {\em Theory of Computing Systems}, 67(3):592--626.

\bibitem[Satorras et~al., 2021]{Satorras2021EnEN}
Satorras, V.~G., Hoogeboom, E., Fuchs, F., Posner, I., and Welling, M. (2021).
\newblock E(n) equivariant normalizing flows for molecule generation in 3d.
\newblock {\em ArXiv}, abs/2105.09016.

\bibitem[Sindhwani et~al., 2015]{sindhwani2015structured}
Sindhwani, V., Sainath, T., and Kumar, S. (2015).
\newblock Structured transforms for small-footprint deep learning.
\newblock {\em Advances in Neural Information Processing Systems}, 28.

\bibitem[Sosnovik et~al., 2021]{Sosnovik2021ScaleEI}
Sosnovik, I., Moskalev, A., and Smeulders, A. (2021).
\newblock Scale equivariance improves siamese tracking.
\newblock {\em 2021 IEEE Winter Conference on Applications of Computer Vision (WACV)}, pages 2764--2773.

\bibitem[Thomas et~al., 2018]{thomas2018learning}
Thomas, A., Gu, A., Dao, T., Rudra, A., and R{\'e}, C. (2018).
\newblock Learning compressed transforms with low displacement rank.
\newblock {\em Advances in neural information processing systems}, 31.

\bibitem[Townshend et~al., 2021]{Townshend2021GeometricDL}
Townshend, R. J.~L., Eismann, S., Watkins, A.~M., Rangan, R., Karelina, M., Das, R., and Dror, R.~O. (2021).
\newblock Geometric deep learning of rna structure.
\newblock {\em Science}, 373:1047 -- 1051.

\bibitem[Urbano and Romero, 2023]{urbano2023self}
Urbano, A. and Romero, D.~W. (2023).
\newblock Self-supervised detection of perfect and partial input-dependent symmetries.
\newblock {\em arXiv preprint arXiv:2312.12223}.

\bibitem[van~der Ouderaa et~al., 2024]{van2024learning}
van~der Ouderaa, T., Immer, A., and van~der Wilk, M. (2024).
\newblock Learning layer-wise equivariances automatically using gradients.
\newblock {\em Advances in Neural Information Processing Systems}, 36.

\bibitem[van~der Ouderaa et~al., 2022]{ouderaa2022relaxing}
van~der Ouderaa, T. F.~A., Romero, D.~W., and van~der Wilk, M. (2022).
\newblock Relaxing equivariance constraints with non-stationary continuous filters.
\newblock In {\em Advances in Neural Information Processing Systems (NeurIPS)}, volume~35.

\bibitem[Wang et~al., 2021]{wang2021equivariant}
Wang, D., Walters, R., Zhu, X., and Platt, R. (2021).
\newblock Equivariant \$q\$ learning in spatial action spaces.
\newblock In {\em 5th Annual Conference on Robot Learning}.

\bibitem[Wang et~al., 2023]{wang2023relaxed}
Wang, R., Walters, R., and Smidt, T.~E. (2023).
\newblock Relaxed octahedral group convolution for learning symmetry breaking in 3d physical systems.
\newblock {\em arXiv preprint arXiv:2310.02299}.

\bibitem[Wang et~al., 2022]{wang2022approximately}
Wang, R., Walters, R., and Yu, R. (2022).
\newblock Approximately equivariant networks for imperfectly symmetric dynamics.
\newblock In {\em International Conference on Machine Learning}, pages 23078--23091. PMLR.

\bibitem[Waterhouse, 1992]{waterhouse1992displacement}
Waterhouse, W.~C. (1992).
\newblock Displacement operators relative to group matrices.
\newblock {\em Linear algebra and its applications}, 169:41--47.

\bibitem[Weiler and Cesa, 2019]{e2cnn}
Weiler, M. and Cesa, G. (2019).
\newblock {General E(2)-Equivariant Steerable CNNs}.
\newblock In {\em Conference on Neural Information Processing Systems (NeurIPS)}.

\bibitem[Weiler et~al., 2021]{Weiler2021CoordinateIC}
Weiler, M., Forr\'e, P., Verlinde, E.~P., and Welling, M. (2021).
\newblock Coordinate independent convolutional networks - isometry and gauge equivariant convolutions on riemannian manifolds.
\newblock {\em ArXiv}, abs/2106.06020.

\bibitem[White and Cotterell, 2022]{white-cotterell-2022-equivariant}
White, J.~C. and Cotterell, R. (2022).
\newblock Equivariant transduction through invariant alignment.
\newblock In {\em Proceedings of the 29th International Conference on Computational Linguistics}, pages 4651--4663, Gyeongju, Republic of Korea. International Committee on Computational Linguistics.

\bibitem[Winkels and Cohen, 2019]{Winkels2019PulmonaryND}
Winkels, M. and Cohen, T. (2019).
\newblock Pulmonary nodule detection in ct scans with equivariant cnns.
\newblock {\em Medical image analysis}, 55:15--26.

\bibitem[Xu et~al., 2022]{XuLDD22}
Xu, Y., Lei, J., Dobriban, E., and Daniilidis, K. (2022).
\newblock Unified fourier-based kernel and nonlinearity design for equivariant networks on homogeneous spaces.
\newblock In Chaudhuri, K., Jegelka, S., Song, L., Szepesv{\'{a}}ri, C., Niu, G., and Sabato, S., editors, {\em International Conference on Machine Learning, {ICML} 2022, 17-23 July 2022, Baltimore, Maryland, {USA}}, volume 162 of {\em Proceedings of Machine Learning Research}, pages 24596--24614. {PMLR}.

\bibitem[Zaheer et~al., 2017]{zaheer2017deep}
Zaheer, M., Kottur, S., Ravanbakhsh, S., Poczos, B., Salakhutdinov, R.~R., and Smola, A.~J. (2017).
\newblock Deep sets.
\newblock {\em Advances in neural information processing systems}, 30.

\bibitem[Zhu et~al., 2022]{ZhuWangGrasp2022}
Zhu, X., Wang, D., Biza, O., Su, G., Walters, R., and Platt, R. (2022).
\newblock Sample efficient grasp learning using equivariant models.
\newblock {\em CoRR}, abs/2202.09468.

\end{thebibliography}




\section*{Checklist}



 \begin{enumerate}

 \item For all models and algorithms presented, check if you include:
 \begin{enumerate}
   \item A clear description of the mathematical setting, assumptions, algorithm, and/or model. [Yes]
   \item An analysis of the properties and complexity (time, space, sample size) of any algorithm. [Not Applicable]
   \item (Optional) Anonymized source code, with specification of all dependencies, including external libraries. [Yes. Available at: \href{https://github.com/kiryteo/GM-CNN}{https://github.com/kiryteo/GM-CNN}]
 \end{enumerate}

 \item For any theoretical claim, check if you include:
 \begin{enumerate}
   \item Statements of the full set of assumptions of all theoretical results. [Yes]
   \item Complete proofs of all theoretical results. [Yes]
   \item Clear explanations of any assumptions. [Yes]     
 \end{enumerate}

 \item For all figures and tables that present empirical results, check if you include:
 \begin{enumerate}
   \item The code, data, and instructions needed to reproduce the main experimental results (either in the supplemental material or as a URL). [Yes]
   \item All the training details (e.g., data splits, hyperparameters, how they were chosen). [Yes]
    \item A clear definition of the specific measure or statistics and error bars (e.g., with respect to the random seed after running experiments multiple times). [Yes]
    \item A description of the computing infrastructure used. (e.g., type of GPUs, internal cluster, or cloud provider). [Yes]
 \end{enumerate}

 \item If you are using existing assets (e.g., code, data, models) or curating/releasing new assets, check if you include:
 \begin{enumerate}
   \item Citations of the creator If your work uses existing assets. [Not Applicable]
   \item The license information of the assets, if applicable. [Not Applicable]
   \item New assets either in the supplemental material or as a URL, if applicable. [Not Applicable]
   \item Information about consent from data providers/curators. [Not Applicable]
   \item Discussion of sensible content if applicable, e.g., personally identifiable information or offensive content. [Not Applicable]
 \end{enumerate}

 \item If you used crowdsourcing or conducted research with human subjects, check if you include:
 \begin{enumerate}
   \item The full text of instructions given to participants and screenshots. [Not Applicable]
   \item Descriptions of potential participant risks, with links to Institutional Review Board (IRB) approvals if applicable. [Not Applicable]
   \item The estimated hourly wage paid to participants and the total amount spent on participant compensation. [Not Applicable]
 \end{enumerate}

 \end{enumerate}

\appendix

\onecolumn













\textbf{\begin{center} \LARGE Appendices \end{center}}\hfill \\

\section{Experimental details}\label{app:experimental_details}
\subsection{Dynamics Prediction}
\subsubsection{Experimental setup} Our architecture consists of four GMConv layers with 128 channels each, utilizing PReLU activations and residual connections. We maintain spatial resolution throughout the network by avoiding pooling operations. A final $1 \times 1$ convolution maps the features to the desired output channels. The neighborhood parameters are set to 4 for cyclic groups and 2 for dihedral groups to optimize local interactions. We initialize weights using Kaiming initialization. The model is trained using the AdamW optimizer with learning rates between 0.003 and 0.009, a weight decay of 0.0153, and a ReduceLROnPlateau scheduler (factor 0.7, patience 8) on a mean squared error loss. Training is conducted with a batch size of 256 for up to 100 epochs, employing early stopping (patience 6) based on validation accuracy to prevent overfitting. We utilize PyTorch for our experiments and the hyperparameters are fine-tuned for each dataset.

\subsubsection{Run times} Using an Nvidia L40S GPU (48GB RAM), forward pass times are approximately 1 second for JetFlow ($62 \times 23$ samples) and 1.4 seconds for Plumes ($64 \times 64$ samples), reflecting the complexity of these tasks. FLOPS are detailed in Tables \ref{tab:results_jet} and \ref{tab:results_smoke}.

\subsection{Image Classification}

\subsubsection{Experimental setup} Our base architecture consists of two GMConv layers (120 channels each) with residual connections, LayerNorm without learnable parameters, and PReLU activations. After the GMConv layers, an adaptive max pooling layer is applied, followed directly by a single fully connected layer that performs classification with the cross-entropy loss function. We experiment with GMConv layers with a neighborhood of 3, incorporating error addition, and with GMPool. We also include experiments for different neighborhood sizes. Training is conducted with a batch size of 1024 for a maximum of 100 epochs, with early stopping to prevent overfitting. Other hyperparameters are consistent with those used in our dynamics prediction experiments. For pooling-based experiments, we employ a GM-CNN architecture with 3 GMConv layers (44, 44, and 56 output channels).

\subsubsection{Run times} Forward pass times vary by input size: $\sim0.6$s for 24x24 images (smallNORB), $\sim0.7$s for 28x28 (MNIST variants), and $\sim0.8$s for 32x32 (CIFAR-10, NORB). These are on par for all the competing methods. Note that our method can be sped up significantly with custom GPU implementations for handling structured matrices.

\section{Proofs}\label{app:proofs}
\subsection{Proof of Lemma \ref{lem:subgroupdiag}}
\begin{proof}
Direct by definition \eqref{eq:gdiag}. 
For $h,h'\in H$ and $g\in G$, the entry of $B_h$ at position $(h',g)$ is $\delta(g=h'h)$. 
Since $H$ is closed under the group operation, $g=h'h\Rightarrow g\in H$.
\end{proof}

\subsection{Proof of Proposition \ref{prop:gmalgebra}}\label{app:proofgmalgebra}
\begin{proof}
    The result follows by linearity from the following explicit properties of group diagonals, which are direct to check from the definitions: 
    \begin{equation}\label{eq:gdalgebra}
    B_g^T=B_g^{-1}=B_{g^{-1}},\quad B_{g_1} B_{g_2}=B_{g_1g_2}.
    \end{equation}
    Finally, the case of Kronecker product follows from \eqref{eq:gdprod}.
\end{proof}

\subsection{Proof of Proposition \ref{prop:distancebds}}\label{app:proofdistancebds}

\begin{proof}
    For a matrix $M$ let $M_0$ be $M$'s projection, i.e. the group matrix realizing the distance to $\mathcal{GM}$. Note that $\|M_0\|\leq \|M\|$ is a consequence of Frobenius norm coming from an inner product.
    
    Since for all matrices $X$ we have $\|X\|=\|X^T\|$, and since $\mathcal{GM}$ is closed under transpose, we have $\mathrm{dist}(M, \mathcal{GM})=\|M-M_0\|=\|M^T-M_0^T\|\geq \mathrm{dist}(M^T, \mathcal{GM})$. By interchanging the roles of $M,M^T$ we also find the other inequality, and thus prove item 1.

    For item 2, note that for all matrices $X,Y$ we have $\|XY\|\leq \|X\|\|Y\|$. Then we use the closure property of $\mathcal{GM}$ under product, and triangle inequality, in order to write $\mathrm{dist}(MN, \mathcal{GM})\leq\|MN - M_0N_0\|=\|M(N-N_0) + (M-M_0)N_0\|\leq \|M\|\|N-N_0\|+\|M-M_0\|\|N_0\|\leq\max\{\|M\|, \|N_0\|\}(\|M-M_0\| + \|N-N_0\|)$, which allows to conclude by noting $\|N_0\|\leq \|N\|$.

    For item 3, note that for all $X,Y$ we have $\|X\otimes Y\|\leq \|X\|\|Y\|$. Then we proceed as for item 2, with tensor product replacing matrix product, keeping in mind that, due to Prop. \ref{prop:gmalgebra}, if $M_0\in\mathcal{GM}^G, N_0\in\mathcal{GM}^H$ then $M_0\otimes N_0\in\mathcal{GM}^{G\times H}$.
\end{proof}.

\section{Extended discussion on notions of displacement rank for group matrices}\label{app:ldr} 
\subsection{Comparison to classical displacement dimension for group matrices}\label{app:gaderwaterhouse}
In this section, we review previous notions of displacement rank from \cite{waterhouse1992displacement, gader1990displacement}. These works were motivated by aiming to generalize different aspects of LDR theory from the case of circulant matrices, formulated very similarly to our Example \ref{ex:circulant}. In the case of group matrices, it was realized that the property of circulant matrices of being constant along diagonals can be generalized to the notion of \emph{group diagonals} (whose name includes the term ``diagonals'' for this reason). Then the natural idea is that displacement needed to have the property of vanishing on group matrices as in~\cite{gader1990displacement, waterhouse1992displacement}.

In a formula extending the Stein-type displacement operator $\Delta_{A,B}M = M - AMB$, which we defined in Section \ref{sec:intro}, \cite{gader1990displacement} defined a displacement operator $\mathcal G(M):=M - P\mathcal Q(M)$, in which $P$ is the permutation matrix corresponding to the cyclic permutation of basis vectors $e_1\mapsto e_2\mapsto\cdots\mapsto e_{|G|}\mapsto e_1$, and $Q(M)$ is an involved operation that cyclically permutes elements of $M$ of each given group diagonal pattern separately. The displacement operation $\mathcal G(M)$ from the work of \cite{gader1990displacement} was further rationalized and made more elegant in \cite{waterhouse1992displacement}, which replaced the term $P\mathcal Q(M)$ by a more general form $\mathcal T(M)$ in which the permutation $P$ can be chosen arbitrarily. In both cases, operator $\mathcal G(M)$ subtracts elements of $M$ which are shifted along group diagonal patterns $B_g$, and as the shift is done via a cyclic permutation, we have the property that if all such differences are zero, then the entries of $M$ are constant along group diagonals.

Our contribution is to simplify the expressions further, by introducing the intermediate reordering $F(M)$, which was not present in previous work. This allows to easily operationalize the implementation that is the focus of this work. Our definition \eqref{eq:dr}, and the more complex one \eqref{eq:dr2}, are in direct parallel to the treatment from \cite{gader1990displacement}, \cite{waterhouse1992displacement} respectively. The main justification for using \eqref{eq:dr2}, is that it will be helpful in the proof of Proposition \ref{prop:displdim} below.

\subsection{Displacement dimension changes under elementary operations}\label{app:displ_dim}
In this section, we consider displacement error $\mathsf{D}_{\vec P}(M)$ as defined in \eqref{eq:dr2}, and the derived notion of displacement dimension $\mathrm{dim}_{\mathsf{D}}(\mathcal M)$ from \eqref{eq:dimd}. For the latter, we are interested in how it behaves under elementary operations like those from~\cite{thomas2018learning, pan2012structured}. Note that in previous work on classical displacement rank \cite{thomas2018learning}, similar results are formulated to study the closure properties of classical structured matrices, however, we were not able to adapt displacement rank \eqref{eq:disprank} similarly to our case (as also mentioned in Section \ref{sec:approxequiv}, above proposition \ref{prop:distancebds}) for the following reasons: (1) the results valid for classical LDR structured matrices are not valid in our case, (2) the neural network interpretation as number of degrees of freedom as quantified in the error to precise equivariance is more important to our setup. 

We summarize our results in the following:
\begin{prop}\label{prop:displdim}
    Let $G, H$ be two groups. Further, consider classes of $|G|\times|G|$-matrices $\mathcal M, \mathcal M'$ and a class of $|H|\times |H|$-matrices $\mathcal N$, with $\mathsf{D}$-dimensions with respect to the corresponding groups denoted respectively as $d_M, d_{M'} d_N$. Then the following holds:
    \begin{enumerate}
        \item The class $\mathcal M^T=\{M^T:\ M\in\mathcal M\}$ has $\mathrm{dim}_{\mathsf{D}}(\mathcal M^T)=d_M$.
        \item The set $\mathcal M+\mathcal M':=\{M+M':\ M\in\mathcal M, M'\in \mathcal M'\}$ has $\mathrm{dim}_{\mathsf{D}}(\mathcal M+\mathcal M')=d_M+d_{M'}$.
        \item Consider the set $\mathcal M\otimes \mathcal N:=\{M\otimes N:\ M\in\mathcal M, N\in\mathcal N\}$. Then with respect to the group diagonals from a direct product group $G\times H$, we have $\mathrm{dim}_{\mathsf D}(\mathcal M\otimes\mathcal N)\leq d_M+d_N$.
    \end{enumerate}
\end{prop}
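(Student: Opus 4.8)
The plan is to reduce everything to linear algebra via one observation: $\mathsf{D}=\mathsf{D}_{\vec P}$ is a \emph{linear} map on $\mathbb{R}^{|G|\times|G|}$ whose kernel is exactly the space $\mathcal{GM}^{G}$ of $G$-group matrices, and this kernel does not depend on the choice of per-row cyclic permutations $\vec P$ (in the reordered coordinates $F(\cdot)$ of \eqref{eq:dr2}, $\mathsf{D}$ acts independently on each row by $v\mapsto v-\sigma_g(v)$, whose kernel is the constant vectors for any full cycle $\sigma_g$). Hence, for any class $\mathcal{M}$, $\dim_{\mathsf{D}}(\mathcal{M})=\dim\mathsf{Span}(\mathcal{M})-\dim\!\big(\mathsf{Span}(\mathcal{M})\cap\mathcal{GM}^{G}\big)$, or, row by row, $\dim_{\mathsf{D}}(\mathcal{M})=\sum_{g\in G}\big(\dim W_g-\varepsilon_g\big)$, where $W_g$ is the span of the $g$-th rows of $\{F(M):M\in\mathcal{M}\}$ and $\varepsilon_g\in\{0,1\}$ equals $1$ iff $\mathbf{1}\in W_g$. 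I will use whichever of these two descriptions is more convenient.

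Item 1 is then immediate: transpose is a linear isometry, so $\dim\mathsf{Span}(\mathcal{M}^{T})=\dim\mathsf{Span}(\mathcal{M})$, while $\mathcal{GM}^{G}$ is closed under transpose (Proposition~\ref{prop:gmalgebra}), so $\mathsf{Span}(\mathcal{M}^{T})\cap\mathcal{GM}^{G}=(\mathsf{Span}(\mathcal{M})\cap\mathcal{GM}^{G})^{T}$ has the same dimension; the formula above yields $d_M$. (On the $F$-side: transposition permutes the rows of $F(M)$ and permutes the entries inside each row, changing neither any $\dim W_g$ nor any $\varepsilon_g$, since $\mathbf{1}$ is permutation-invariant.) For item 2, linearity of $\mathsf{D}$ gives $\mathsf{Span}\{\mathsf{D}(M+M'):M\in\mathcal{M},\,M'\in\mathcal{M}'\}=\mathsf{D}(\mathsf{Span}\,\mathcal{M})+\mathsf{D}(\mathsf{Span}\,\mathcal{M}')$, so $\dim_{\mathsf{D}}(\mathcal{M}+\mathcal{M}')=d_M+d_{M'}-\dim\!\big(\mathsf{D}(\mathcal{M})\cap\mathsf{D}(\mathcal{M}')\big)$; the stated equality is thus the assertion that the two displacement images meet only in $0$, which holds for the independently-parametrized error structures at issue here (row by row, $W_g\mapsto W_g+W'_g$, and the overlap contributes nothing), while in general the argument gives the bound ``$\le$''.

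For item 3 I would begin from the compatibility $F^{G\times H}(M\otimes N)=F^{G}(M)\otimes F^{H}(N)$ (a direct check from the definition of $F$ and the group law of $G\times H$). Since a Kronecker product of two constant-row matrices is again constant-row, it lies in $\mathcal{GM}^{G\times H}=\ker\mathsf{D}^{G\times H}$; so, writing $F^{G}(M)=\overline{M}+R_M$ and $F^{H}(N)=\overline{N}+R_N$ (row-averaged part plus zero-row-sum part), the expansion $F^{G\times H}(M\otimes N)=\overline{M}\otimes\overline{N}+\overline{M}\otimes R_N+R_M\otimes\overline{N}+R_M\otimes R_N$ has its first summand annihilated, and only the three cross-terms survive. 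Choosing the cyclic permutation of $G\times H$ to be the ``$H$-first'' lexicographic successor — for which $(I-C)(\mathbf{1}_G\otimes w)=\mathbf{1}_G\otimes(I-C_H)w$ — rewrites $\mathsf{D}^{G\times H}(\overline{M}\otimes R_N)$ in the shape $\overline{M}\otimes\mathsf{D}^{H}(N)$ and $\mathsf{D}^{G\times H}(R_M\otimes\overline{N})$ as $\mathsf{D}^{G}(M)\otimes\overline{N}$, which one then bounds using $\dim\mathsf{Span}\{R_M\}=d_M$ and $\dim\mathsf{Span}\{R_N\}=d_N$, together with a parallel treatment of the genuinely bilinear term $R_M\otimes R_N$.

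The main difficulty is sharpening this into the stated bound $\dim_{\mathsf{D}}(\mathcal{M}\otimes\mathcal{N})\le d_M+d_N$: the crude estimates for the three Kronecker cross-terms are multiplicative (of order $d_M\cdot\dim(\text{group-matrix part of }\mathcal{N})$, and so on), so to reach $d_M+d_N$ one must use more than the dimensions $d_M,d_N$ of the error spaces — in particular the low displacement-\emph{rank} structure of the error component (as in \eqref{eq:ldr1}) and the fact that the constant-row factor appearing in each cross-term is effectively one-dimensional for the relevant classes. By contrast, items 1 and 2 are purely formal consequences of linearity of $\mathsf{D}$ and the transpose-closure of $\mathcal{GM}$, so I expect essentially all the real work to be in item 3's bookkeeping.
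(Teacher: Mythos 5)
Your treatment of items 1 and 2 is sound, and for item 1 you take a genuinely different route from the paper: you observe that $\mathsf D$ is linear with kernel exactly $\mathcal{GM}^G$, so $\mathrm{dim}_{\mathsf D}(\mathcal M)=\dim\mathsf{Span}(\mathcal M)-\dim\left(\mathsf{Span}(\mathcal M)\cap\mathcal{GM}^G\right)$ by rank--nullity, and then item 1 follows from transpose-closure of $\mathcal{GM}^G$ (Prop.~\ref{prop:gmalgebra}). The paper instead computes explicitly that $F(M^T)_{g,h}=F(M)_{g^{-1},g^{-1}h}$ (their Eq.~\eqref{eq:fmt}, via the commutation relation $B_g\mathrm{diag}(v)=\mathrm{diag}(w)B_g$) and argues that the induced relabeling of rows and within-row conjugated cycles $\tau_g^{-1}\sigma_g\tau_g$ preserves all dimension counts; your kernel-based identity is cleaner and also gives a tidier justification of why $\mathrm{dim}_{\mathsf D}$ is independent of $\vec P$ in \eqref{eq:dr2}. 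On item 2 you are right to flag that only ``$\le$'' follows in general: the paper's own proof invokes precisely $\dim(V+W)\le\dim V+\dim W$ and hence also establishes only the inequality, despite the statement asserting equality (and note that even your refined formula overstates things slightly, since $\mathsf{Span}\{\mathsf D(M)+\mathsf D(M')\}$ can be strictly smaller than $\mathsf{Span}\,\mathsf D(\mathcal M)+\mathsf{Span}\,\mathsf D(\mathcal M')$, e.g.\ for singleton classes).

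The genuine gap is item 3, which you do not close -- but the obstruction you identify is real, and the paper does not close it either: its entire argument is the identity $F(M\otimes N)=F(M)\otimes F(N)$, which you also derive, and which by itself only controls row spans multiplicatively. Concretely, the $(g,h)$-row of $F(M\otimes N)$ ranges over elementary tensors $F_g(M)\otimes F_h(N)$, whose span has dimension $\dim\mathsf{Span}\{F_g(M)\}\cdot\dim\mathsf{Span}\{F_h(N)\}$, exactly the multiplicative blow-up you point to in the term $R_M\otimes R_N$. Without further hypotheses on the classes the stated additive bound can fail: for $G=H=C_3$, let $\mathcal M=\{M_u:u\in\mathbb R^3\}$ where all rows of $F(M_u)$ equal $u$ (such $M_u$ exists since $F$ is a bijective reindexing of entries), and define $\mathcal N=\{N_v\}$ analogously; then $d_M=d_N=2$, but the rows of $F(M_u\otimes N_v)$ are all equal to $u\otimes v$, and since $\mathsf{Span}\{u\otimes v\}=\mathbb R^9$ one gets $\mathrm{dim}_{\mathsf D}(\mathcal M\otimes\mathcal N)=8>4=d_M+d_N$. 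So the bound requires additional structure on $\mathcal M,\mathcal N$ (e.g.\ the rank-$r$ form \eqref{eq:ldr1}, or one factor consisting of group matrices), and your reluctance to claim it in full generality is the correct instinct rather than a failure of your argument relative to the paper's.
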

\begin{proof}
    In the setting of item 1, we first claim that \begin{equation}\label{eq:fmt}
    F(M^T)_{gh}=F(M)_{g^{-1},g^{-1}h}.
    \end{equation}
    Once this is proved, we note first that $F(M^T)$'s rows can be re-labeled by $g^{-1}\mapsto g$, to obtain the matrix $\widetilde F(M)$ with entries $F(M)_{g,gh}$. Then, if $\tau_g(h):=gh$, and $\sigma_g$ was the permutation used in the definition of $\mathsf{D}_{\vec P}(M)$ in \eqref{eq:dr2}, then for the rows of $\widetilde F(M)$ we can use cyclic permutation $\tau_g^{-1}\sigma_g\tau_g$. After this change coordinates $\widetilde F(M)\mapsto F(M^T)$. Since all the applied transformations do not depend on the choice of $M\in\mathcal M$, and do not change the dimension counts (since, as observed before, $\mathrm{dim}_{\mathsf{D}}$ does not depend on the choices of $\sigma_g$'s), this means that $\mathcal M, \mathcal M^T$ have the same $\mathsf{D}$-dimension.

    It now suffices to prove \eqref{eq:fmt}. For this, we first note the following commutation relation of vectors with group diagonal matrices $B_g$:
    \begin{equation}\label{eq:fgbg}
        B_g\mathrm{diag}(v)=\mathrm{diag}(w)B_g, \quad \text{where}\quad w_h=v_{g^{-1}h}, h\in G.
    \end{equation}
    The above follows directly using the definition $(B_g)_{h,h'}=\delta(h=gh')$. 
    
    Recall that $B_g$ is a permutation matrix, thus $B_g^T=B_g^{-1}$ and we can directly verify $B_gB_h=B_{gh}$, $B_{id}=Id$, thus 
    \begin{equation}\label{eq:bgt}
        B_g^T=B_g^{-1}=B_{g^{-1}}.
    \end{equation}
    Using \eqref{eq:fgbg}, \eqref{eq:bgt} and a change of variable $g\mapsto g^{-1}$ in the sums, we now get, denoting $F=F(M)$,
    \[
        M^T=\sum_g B_g^T\mathrm{diag}((F_{gh})_h) = \sum_g \mathrm{diag}((F_{g,gh})_h B_g^{-1} = \sum_g \mathrm{diag}((F_{g^{-1},g^{-1}h})_h) B_g.
    \]
    This directly implies \eqref{eq:fmt}, as desired.

    Item 2 follows directly from linearity and using the property that $\mathrm{dim}(V+W)\leq \mathrm{dim}V + \mathrm{dim}W$ for any vector subspaces $V, W\subseteq \mathbb R^{|G|\times|G|}$.

    Item 3 follows from the definition of Kronecker product, observing that $F(M\otimes N) = F(M)\otimes F(N)$.
\end{proof}

\section{Generalization of Framework to Homogeneous Spaces and Infinite Groups}
\label{app:homogneous}

\subsection{Actions on homogeneous spaces.} 

In most machine learning scenarios, the input is more frequently defined not as a function over a group, but rather on a space on which the group acts~\cite{Kondor2018OnTG}. That is, the space $X$ that encodes the inputs, is not identifiable with the symmetry group $G$, but rather is a \emph{homogeneous space of $G$}, i.e. it can be identified with a quotient by a subgroup $H\subseteq G$, denoted $X=G/H$. Elements $x\in X$ are then identified with subsets of the partition into right cosets $[x]:= xH=\{\bar x\in G:\ \exists h\in H, \ \bar x= xh\}$. In general, $G/H$ still has a natural action of $G$, given by $g\cdot[x]:=[gx]$. 

\textbf{Convolution over $G/H$.} The convolution operations can be encoded in this case as follows. Consider a kernel $\phi:G\to\mathbb R$ and a channel $f:G/H\to \mathbb R$, we define
\begin{equation}\label{eq:homconv}
    \phi\star f([x]):=\sum_{g\in G}\phi(g) f(g^{-1}[x])=\sum_{g\in G,[y]\in G/H}\phi(g)f([y])\delta([y]=[g^{-1}x]).
\end{equation}
The above operation can be expressed using group matrices by observing the following
\begin{eqnarray}\label{eq:homconvaux}
    \delta\left([y]=[g^{-1}x]\right) &=& \delta(\exists h\in H:\ yh=g^{-1}x)=\sum_{h\in H}\delta(yh=g^{-1}x)=\sum_{h\in H}\delta(x=gyh)\nonumber\\
    &=&\sum_{h\in H}(B_{gy})_{xh} = \left[B_{gy}\mathbf 1_H\right]_x,
\end{eqnarray}
in which $\mathbf 1_H$ is the characteristic vector of $H$. One can verify that $\left(B_{gy}\mathbf 1_H\right)(x) = \left(B_{gy'}\mathbf 1_H\right)(x')$ whenever $[x]=[x']$ and $[y]=[y']$, thus expression \eqref{eq:homconvaux} does not depend on the choices of representatives $x,y$. Using this observation, we can fix a choice of representatives, namely
\[
    X\subset G,\quad \text{such that}\quad \forall gH\in G/H, \ |gH\cap X|=1. 
\]
Then we can equivalently work with $f, \phi\star f:X\to \mathbb R$. We then write \eqref{eq:homconv} in terms of group diagonal matrices and representatives from $X$ as follows:
\begin{equation}\label{eq:homgmconv}
    \phi\star f(x) = \sum_{g\in G, y\in X} \phi(g) f([y])\left(B_{gy}\mathbf 1_H\right)(x).
\end{equation}

\subsection{Infinite discrete groups and padding.}\label{app:padding} 

Although for ease of exposition in Section \ref{sec:formal} we used periodic translations, more traditionally CNNs use the group $G=\mathbb Z^2$, which is infinite. Then, operation \eqref{eq:gconv} involves an infinite sum, and is thus not computable. This is naturally taken care of by the following adjustments, which we state directly for general groups $G$: 
\begin{enumerate}
    \item We work only with inputs $\psi$ of support contained in a fixed finite set $X_{in}$ (for CNNs, $X_{in}$ is a square $ \{0,\dots,n-1\}^2\subset\mathbb Z^2$).
    \item We use kernels $\phi$ of support constrained to a finite set $\mathcal N$, typically a radius-$k$ (in the word metric induced by a set of generators) neighborhood of the identity. We further assume $\mathcal N_k$ to be symmetric (we use a symmetric set of generators), in which ``symmetric'' means closed under taking inverses: $g\in \mathcal N\Leftrightarrow g^{-1}\in\mathcal N$. 
    \item While applying $\phi\star\psi(x), x\in X_{in}$, the computation \eqref{eq:gconv} involves terms $\psi(g^{-1}x)$, in which sometimes $g^{-1}x\notin X_{in}$. We then have to extend $\phi$ by zero on the set $(X_{in})_{\mathcal N}$, where we use notation \[
        (A)_B:=\{b^{-1}a:\ a\in A, b\in B\}=\mathsf{supp}(1_B\star 1_A).
    \]
    Note that if $\mathcal N$ is the radius-$k$ word-distance ball around the identity, then $(A)_{\mathcal N}$ also can be described as the set of all elements at word-distance $\le k$ from $A$.
\end{enumerate}
The above operations can be summarized as follows:
\begin{eqnarray}\label{eq:padding}
    \mathsf{Pad}(\psi)(x)&:=&\left\{\begin{array}{ll}\psi(x)&\text{ if }x\in X_{in},\\ 0 &\text{ if }x\in \partial_{\mathcal N}X_{in}:=(X_{in})_{\mathcal N}\setminus X_{in},\end{array}\right.\nonumber\\
    \left[\phi\ \star\ \mathsf{Pad}(\psi)\right](x)&:=&\sum_{g\in\mathcal N}\phi(g) \mathsf{Pad}(\psi(g^{-1}x))\quad\text{for}\quad x\in X_{in}.
\end{eqnarray}
Formally, extension by zero corresponds to a subspace immersion given by a matrix multiplication $E:\mathbb R^{X_{in}}\to\mathbb R^{(X_{in})_{\mathcal N}}$ with entries $E_{x,y}=\delta(x=y), x\in X_{in}, y\in (X_{in})_{\mathcal N}$, and then we can define analogues of group matrices $B_g^{(X_{in})_{\mathcal N
}}, g\in\mathcal N$ by the same formula \eqref{eq:gdiag} with the restriction of $h,h'\in(X_{in})_{\mathcal N}$ only. After this, the formula for \eqref{eq:padding} (analogous to \eqref{eq:gmconv} for this case) reads
\begin{equation}\label{eq:gmpadd}
    \widetilde{\mathsf{Conv}}_\phi\overrightarrow\psi:= E^T\left(\sum_{g\in\mathcal N}\phi(g)B_g^{(X_{in})_{\mathcal N
}}\right) E\ \overrightarrow\psi.
\end{equation}

\subsection{Error to equivariance.}\label{app:errortoeq} 

Note that operation $\widetilde{\mathsf{Conv}}_\phi$ is not equivariant under $G$-action, simply because $\overrightarrow \psi$ has entries in $X_{in}$, which in general is not a union of $G$-action orbits, and thus is not invariant under the $G$-action. In other words, the restriction map encoded in $I^T$ in  \eqref{eq:gmpadd} is ``the culprit'' responsible for the loss of equivariance, whereas the term in parenthesis in \eqref{eq:gmpadd} is actually equivariant when applied to elements of the image of $E$ (corresponding to functions over $(X_{in})_{\mathcal N}$ that are zero outside $X_{in}$). A benefit of \eqref{eq:gmpadd} is that it makes it straightforward for us to measure the equivariance error of $\widetilde{\mathsf{Conv}}$, by extending the theory from Section \ref{sec:approxequiv}. Working on the image of the extension map $I$, we define a version of the displacement operator, denoted $\widetilde{\mathsf{D}}$, as follows. Consider $\widetilde M:=EE^TM$ where $M:=\left(\sum_{g\in\mathcal N}\phi(g)B_g^{(X_{in})_{\mathcal N}}\right)$: following \eqref{eq:genmatrix}, we can encode this matrix as
\begin{equation}\label{eq:genmatrixpadd}
    \widetilde M=\sum_{g\in\mathcal N} \mathrm{diag}(\widetilde F_g)B_g^{(X_{in})_{\mathcal N}},\quad\text{where}\quad (\widetilde F_g)_x:=\widetilde M_{x,xg^{-1}}.
\end{equation}
Then we form a matrix $F(\widetilde M)$ with rows $\widetilde F_g, g\in\mathcal N$ and define $\mathsf{D}(\widetilde M)$ as in \eqref{eq:dr}. We can bound the displacement dimension of matrices of the form \eqref{eq:genmatrixpadd} by noting that the $M$ defined earlier has $\mathsf{D}(M)=0$ and by linearity of $\mathsf{D}$, only columns of $\widetilde F$ corresponding to elements $x\in X_{in}$ such that there exists $g\in \mathcal N$ such that $g^{-1}x'\in \partial_{\mathcal N}X_{in}$ can contribute to degrees of freedom of $\mathsf{D}(\widetilde M)$. We thus find that for $\mathcal M:=\{\widetilde M\text{ as in \eqref{eq:genmatrixpadd}}\}$ it holds that:
\begin{equation}\label{eq:ddim}
     \mathrm{dim}_{\mathsf{D}}(\mathcal M)\leq |(X_{in})_{\mathcal N}\setminus X_{in}|\ |\mathcal N|,\qquad \mathsf{DR}(\mathcal M)\leq |(X_{in})_{\mathcal N}\setminus X_{in}|.
\end{equation}
The intuitive explanation for \eqref{eq:ddim} in the case of padding for classical CNNs is that \textbf{the number of padding pixels provides rank control for the error to precise equivariance in each layer}.

\section{Equivariance Error Analysis:}
\label{app:error_analysis}
In \cite{wang2022approximately}, the authors use the equivariance error as a metric to quantify the degree to which a model deviates from perfect symmetry under group transformations. This measure is relevant for approximately equivariant networks, which aim to balance the symmetry constraints and model flexibility. The authors argue that approximately equivariant networks can better capture the imperfect symmetries in real-world scenarios via adding soft equivariance regularization during training. However, equivariance error analysis for GM-CNN suggests that while it does not achieve the optimal balance between data and model equivariance error, it consistently outperforms most other methods, including ConvNet, RPP, Lift as seen in Figure~\ref{fig:equivariance-error}. Our analysis raises the possibility that the balance between model and data equivariance error and its relation to overall performance needs further analysis to accurately quantify. GM-CNN's ability to capture relevant symmetries and offer strong predictive capabilities indicate that certain applications may benefit from a more flexible approach to equivariance. Despite RSteer showing equivariance error close to optimal level, GM-CNN offers a competitive balance and a promising framework for real-world applications where both symmetry and model flexibility are crucial.


\begin{figure}
    \centering    \includegraphics[width=1.0\linewidth]{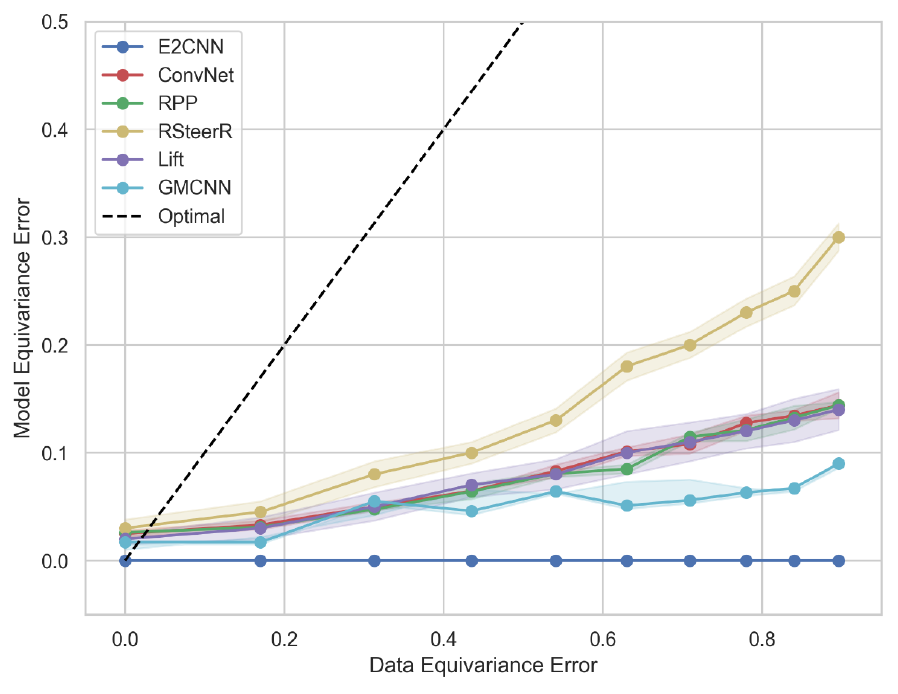}
    \caption{Equivariance error analysis on synthetic smoke plume with different levels of
rotational equivariance as described in \cite{wang2022approximately}.}
    \label{fig:equivariance-error}
\end{figure}


\end{document}